\documentclass{article}

 \usepackage[preprint]{neurips_2026}

\usepackage[utf8]{inputenc} % allow utf-8 input
\usepackage[T1]{fontenc}    % use 8-bit T1 fonts
\usepackage{hyperref}       % hyperlinks
\usepackage{url}            % simple URL typesetting
\usepackage{booktabs}       % professional-quality tables
\usepackage{amsfonts}       % blackboard math symbols
\usepackage{nicefrac}       % compact symbols for 1/2, etc.
\usepackage{microtype}      % microtypography
\usepackage{xcolor}         % colors

\usepackage{multirow}
\usepackage{graphicx}
\usepackage{subcaption}
\usepackage[table]{xcolor}
\usepackage{amsmath}
\usepackage{amssymb}
\usepackage{algorithm}
\usepackage{algpseudocode}
\usepackage{xcolor}
\usepackage{array}
\usepackage{colortbl}
\usepackage{amsthm}

\definecolor{taskblue}{RGB}{72,118,255}
\definecolor{taskred}{RGB}{238,76,76}
\definecolor{mergeyellow}{RGB}{255,196,0}
\definecolor{flatbg}{RGB}{232,239,255}
\definecolor{alignbg}{RGB}{255,235,235}
\definecolor{mergebg}{RGB}{255,246,210}

\newtheorem{theorem}{Theorem}[section]  
\newtheorem{corollary}[theorem]{Corollary} 
\newtheorem{lemma}[theorem]{Lemma}

\newtheorem{assumption}[theorem]{Assumption}

\usepackage{wrapfig}
\usepackage{hyperref}
\usepackage{enumitem}

\usepackage{comment}
\title{When Privacy Hurts Mergeability: Geometry-Aware Model Merging under Differential Privacy}

\author{%
  Jin Liu\\
  Xidian University\\
  \texttt{jinliu9787@gmail.com} \\
  \And
  Junkang Liu \\
  Tianjin University \\
  \texttt{junkangliukk@gmail.com} \\
  \And
  Ning Xi \\
  Xidian University \\
  \texttt{nxi@xidian.edu.cn} \\
  \And
  Yinbin Miao \\
  Xidian University \\
  \texttt{ybmiao@xidian.edu.cn} \\
  \And
  Dawei Wei \\
  Xidian University \\
  \texttt{weidawei58@gmail.com} \\
  \And
  Ke Cheng \\
  Xidian University \\
  \texttt{chengke@xidian.edu.cn} \\
  \And
  Jianfeng Ma \\
  Xidian University \\
  \texttt{jfma@mail.xidian.edu.cn} \\
}

\begin{document}

\maketitle

\begin{abstract}
Model merging promises to construct a single multi-task model from independently fine-tuned task models without accessing the original task data. 
This makes it attractive when task data cannot be centralized, but released task models may still leak private fine-tuning data.
Differential privacy (DP) provides a principled mechanism for limiting such leakage, yet its effect on model merging remains poorly understood. 
In this paper, we study the geometry of differentially private model merging and identify two geometric obstacles that make private task models difficult to merge: \emph{local sharpness}, which makes task losses sensitive to the parameter displacement induced by merging, and \emph{reference drift}, which measures the displacement of private task models from the shared pretrained initialization and amplifies cross-task interference. Based on these observations, we propose \textbf{DP-Merging}, a geometry-aware framework that improves the mergeability of differentially private task models. DP-Merging uses a DP-compatible sharpness-aware objective to guide each private task model toward flatter loss regions, and a reference-based alignment regularizer to keep task models close to the shared pretrained initialization.
We derive a merge-gap upper bound showing that reducing local curvature and reference drift tightens the bound on the loss increase induced by merging.
Experiments on vision and language tasks across multiple privacy budgets show that DP-Merging consistently improves private merged-model performance while preserving the privacy guarantees of the underlying DP fine-tuning procedures.

\end{abstract}
\section{Introduction}
\label{sec:1}

Fine-tuning pretrained models has become the standard way to adapt foundation models to diverse downstream tasks~\cite{liu2026rethinking,wang2026taming}. 
As task-specific models accumulate, storing and deploying them independently becomes increasingly costly. 
Multi-task learning can integrate these capabilities into a model, but requires joint access to data from all tasks and expensive retraining~\cite{fifty2021efficiently,agiza2024mtlora}. 
Model merging has recently emerged as a practical alternative~\cite{yang2026model}: it directly combines independently fine-tuned models in parameter space to obtain a unified model, without accessing the original task data. This makes model merging particularly attractive when task data are decentralized or cannot be shared.

Despite its potential, data-free merging is not inherently privacy-preserving. Although raw data are not shared during merging, the released task-specific weights and task vectors are derived from private data, posing a risk of sensitive information leakage~\cite{zhang2024badmerging,wang2025purity,yuan2025merge}. Differential privacy (DP)~\cite{dwork2006calibrating} provides a rigorous framework for limiting such leakage by bounding the influence of any single training example on the released models. A natural solution is therefore to fine-tune each task model under DP and then merge the resulting private models. However, as we show empirically, this straightforward solution can suffer from a degradation that goes beyond the utility loss of individual private models: DP can specifically damage the geometric compatibility required for parameter-space merging. This raises a central question: ~\textit{what makes differentially private models difficult to merge, and how can we improve their mergeability under privacy constraints?}

\begin{figure*}[t]
    \centering
    
    \begin{subfigure}[b]{0.32\textwidth}
        \centering
        \includegraphics[width=\textwidth]{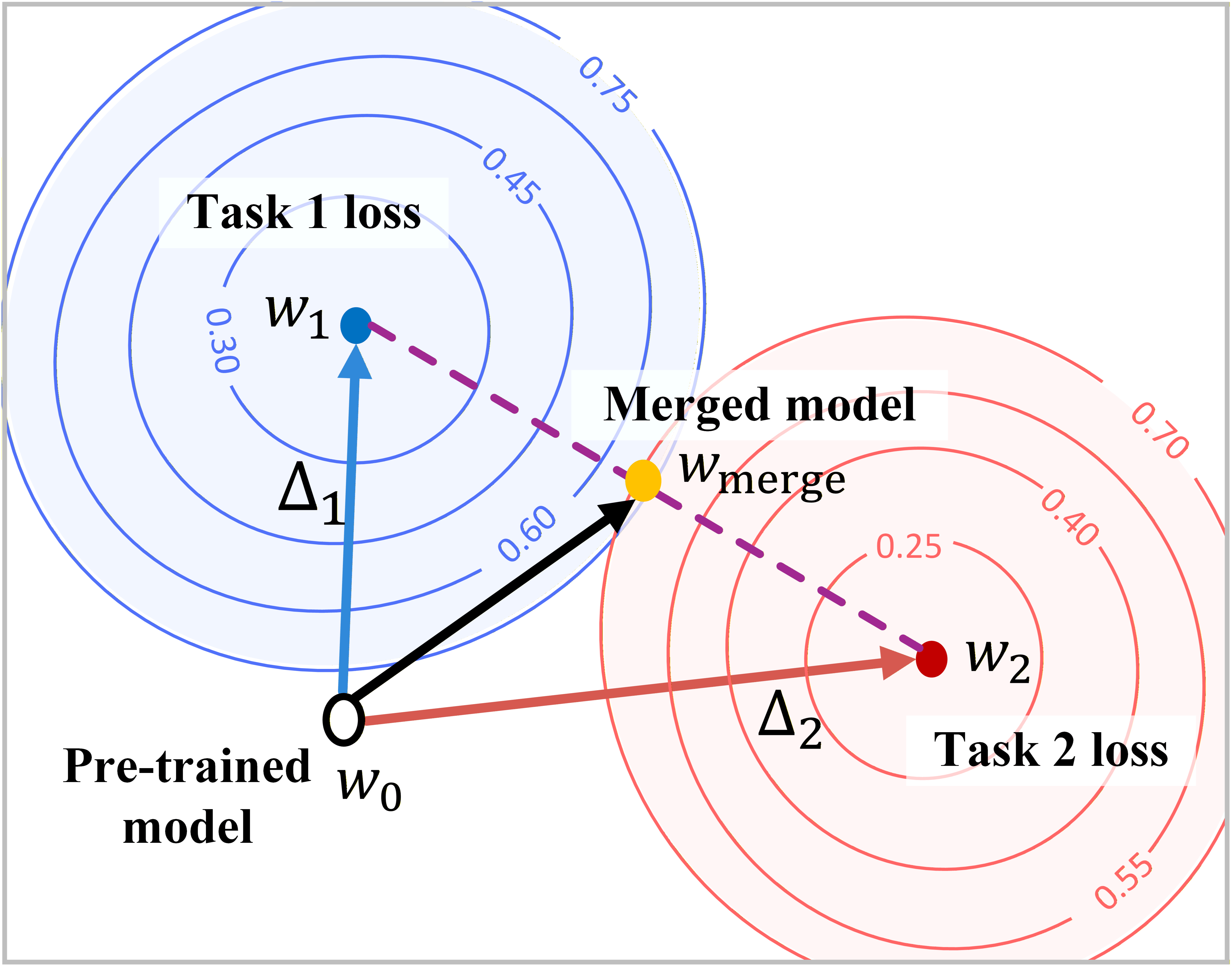}
        \caption{Non-private mergeing}
        \label{fig:barrier_a}
    \end{subfigure}
    \begin{subfigure}[b]{0.32\textwidth}
        \centering
        \includegraphics[width=\textwidth]{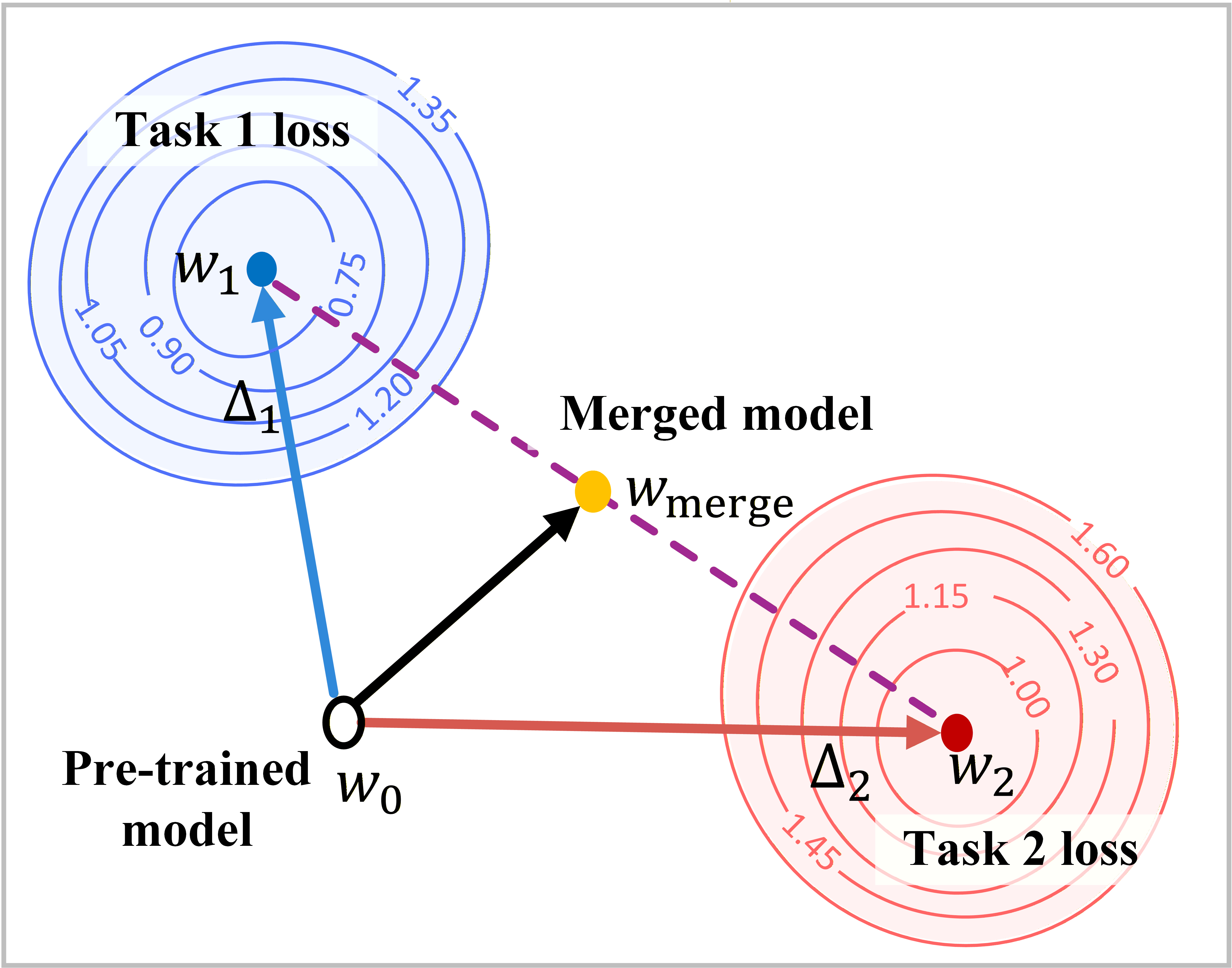}
        \caption{Naive DP merging}
        \label{fig:barrier_b}
    \end{subfigure}
    \begin{subfigure}[b]{0.32\textwidth}
        \centering
        \includegraphics[width=\textwidth]{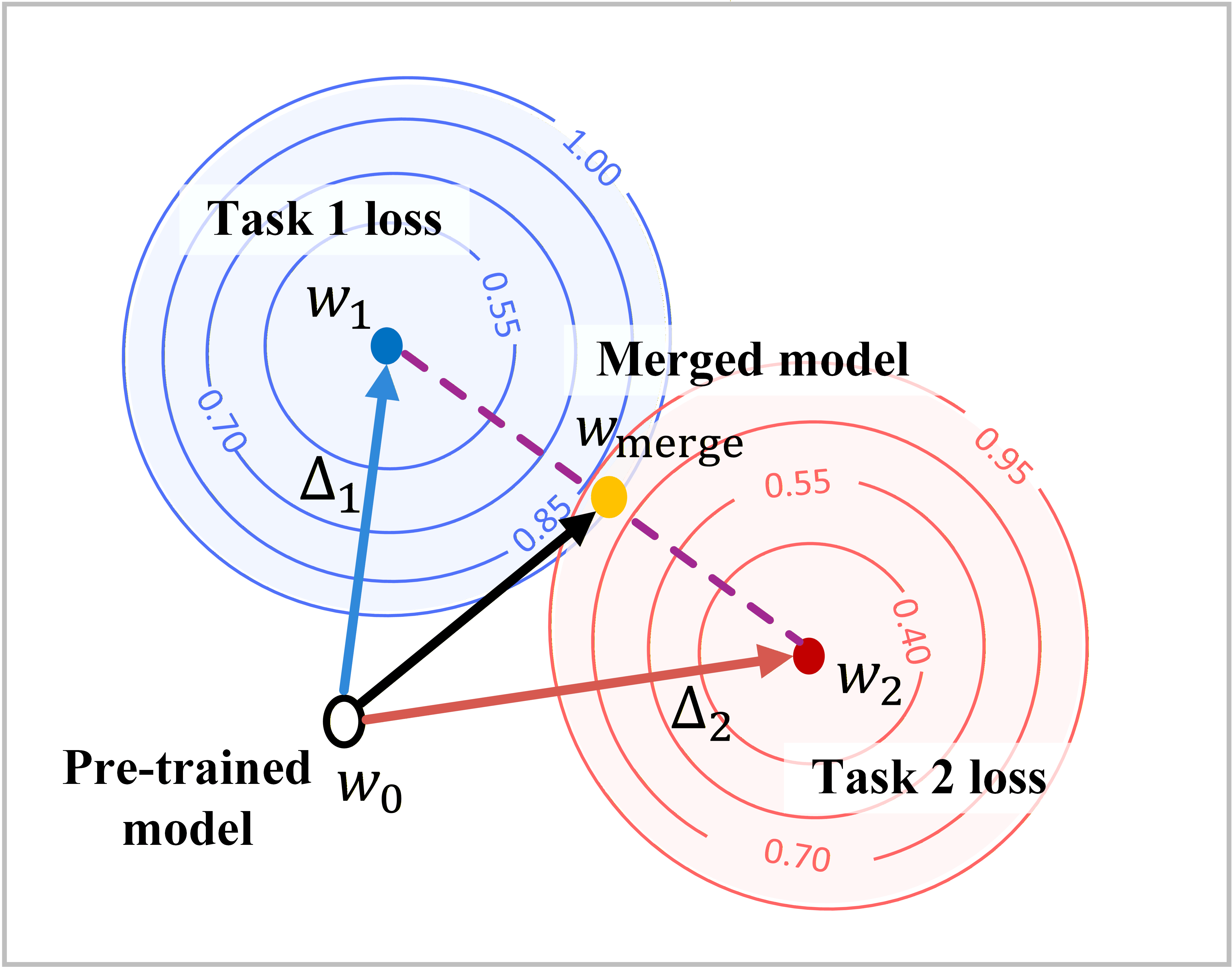}
        \caption{DP-Merging (Ours)}
        \label{fig:barrier_c}
    \end{subfigure}
\caption{
Geometric illustration of model merging under DP.
Task vectors are fine-tuned from $w_0$ and merged in parameter space.
(a) Without DP, flat and compatible solutions merge effectively.
(b) Naive DP can yield sharper solutions with weaker reference alignment, leading to high-loss merging.
(c) DP-Merging improves flatness and reference alignment, yielding lower-loss merging under DP.
}
    \label{fig:geometry}
    \vspace{-2mm}
\end{figure*}

We argue that the difficulty is not merely that DP lowers the accuracy of the models being merged. Rather, parameter-space merging relies on geometric compatibility among task-specific solutions, which can be disrupted by DP fine-tuning. Prior work suggests that merging often benefits when independently fine-tuned models remain geometrically compatible~\cite{ainsworth2023git}. In particular, interpolation or averaging tends to incur smaller loss increase when the path between models stays in relatively flat, low-loss regions~\cite{pena2023re,ito2025linear}. In the private setting, these geometric conditions become more fragile. The clipping and noise used by DP may leave each task solution in a sharper local basin, making its loss more sensitive to the parameter displacement induced by merging. Meanwhile, independent private fine-tuning can move task models farther from the shared pretrained initialization, increasing their mutual mismatch and amplifying cross-task interference. We refer to these two obstacles as \emph{local sharpness} and \emph{reference drift}. These effects make private task models more sensitive to merge-induced displacement and less compatible in parameter space, leading to high-loss merged solutions, as illustrated in Figure~\ref{fig:geometry}.

Motivated by this geometric view, we propose \textbf{DP-Merging}, a simple geometry-aware framework for differentially private model merging. 
Rather than designing a new post-hoc merging operator, DP-Merging applies two minimal interventions during private fine-tuning to restore the geometric conditions required by merging. 
First, a DP-compatible sharpness-aware objective encourages each private task model to lie in a flatter local region, reducing its sensitivity to merge-induced displacement. 
Second, a reference-anchored alignment regularizer keeps task vectors close to the shared pretrained initialization, limiting reference drift and reducing cross-task interference. 
Since the final merging step only processes DP-released task models, it remains a post-processing operation and incurs no additional privacy loss.
We provide a theoretical merge-gap analysis showing that reducing local curvature and reference drift improves mergeability, and empirically validate DP-Merging across vision and language tasks under multiple privacy budgets.

In summary, we make the following contributions:
\begin{itemize}
\item 
We identify a geometric failure mode of differentially private model merging, termed \emph{DP-induced mergeability degradation}. We show that DP fine-tuning can make task-specific models harder to merge by increasing local sharpness and reference drift, which amplify parameter interference after merging.

\item 
We propose \textbf{DP-Merging}, a simple geometry-aware private fine-tuning framework. 
DP-Merging combines a DP-compatible sharpness-aware objective with a reference-based alignment regularizer to produce private task models that are flatter and more geometrically aligned for parameter-space merging.

\item 
We provide theoretical and empirical evidence. 
Our merge-gap bound shows that the loss increase after merging is controlled by local curvature and merge-induced parameter displacement, and experiments on vision and language tasks across multiple privacy budgets demonstrate consistent improvements over standard DP fine-tuning followed by merging.
\end{itemize}

%\vspace{-2mm}
\section{Related Work}
\vspace{-1mm}
\paragraph{Non-private model merging.}
Existing work on model merging mainly studies the non-private setting. Representative approaches include direct weight-space averaging, such as Weight Averaging~\cite{wortsman2022model}, task-vector composition such as Task Arithmetic~\cite{ilharco2023editing}, and more structured rules based on parameter importance or interference resolution, such as Fisher-weighted merging~\cite{matena2022merging}, RegMean~\cite{jin2023dataless}, TIES-Merging~\cite{yadav2023tiesmerging}, PCB Merging~\cite{du2024parameter}, and WUDI-Merging~\cite{cheng2025whoever}. These methods typically assume direct access to task-specific weights, parameter deltas, or importance statistics. 
In contrast, we study a privacy-constrained setting where task-specific models must be obtained under differential privacy before they can be released and merged.

\vspace{-1mm}
\paragraph{Differentially private fine-tuning.}
DP fine-tuning of pretrained models has become increasingly practical, with most methods relying on per-example gradient clipping and Gaussian noise injection to improve the privacy--utility trade-off for a single released model~\cite{yu2022differentially,park2023differentially,bu2024differentially,li2024fine}. Other work also explores alternative private fine-tuning paradigms~\cite{bu2024differentially}, such as forward-pass perturbation~\cite{du2023dp} or zeroth-order optimization~\cite{zhang2024dpzero}. However, these methods are designed mainly to preserve the utility of individual private models. 
They do not address whether multiple privately fine-tuned task models remain geometrically compatible for post-hoc merging, which is the focus of our work.

\vspace{-1mm}
\paragraph{Geometry of mergeability and sharpness.}
Prior work indicates that the success of model merging can be influenced not only by the merging method but also by geometric properties of the models, such as alignment in parameter space, low-loss connectivity, and landscape structure.
Averaging or interpolation is more reliable when fine-tuned models lie in compatible low-loss regions or admit low-loss connectivity~\cite{garipov2018loss,draxler2018essentially,wortsman2022model}, while direct merging can fail when parameters are misaligned, for example due to permutation symmetries~\cite{ainsworth2023git,ito2025linear,zhang2025beyond}. Sharpness-aware optimization methods, such as Sharpness
Aware Minimization (SAM~\cite{foret2021sharpnessaware}) seeks flatter solutions by optimizing losses under local parameter perturbations, and is closely related to interpolation stability and low-loss connectivity~\cite{lee2025mitigating}. These studies clarify important geometric conditions for merging in non-private settings, but do not examine how privatization perturbs those conditions before composition.
\section{Preliminaries}
\label{sec:preliminaries}

\begin{figure*}[h]
    \centering
    \begin{subfigure}[b]{0.24\textwidth}
        \centering
        \includegraphics[width=\textwidth]{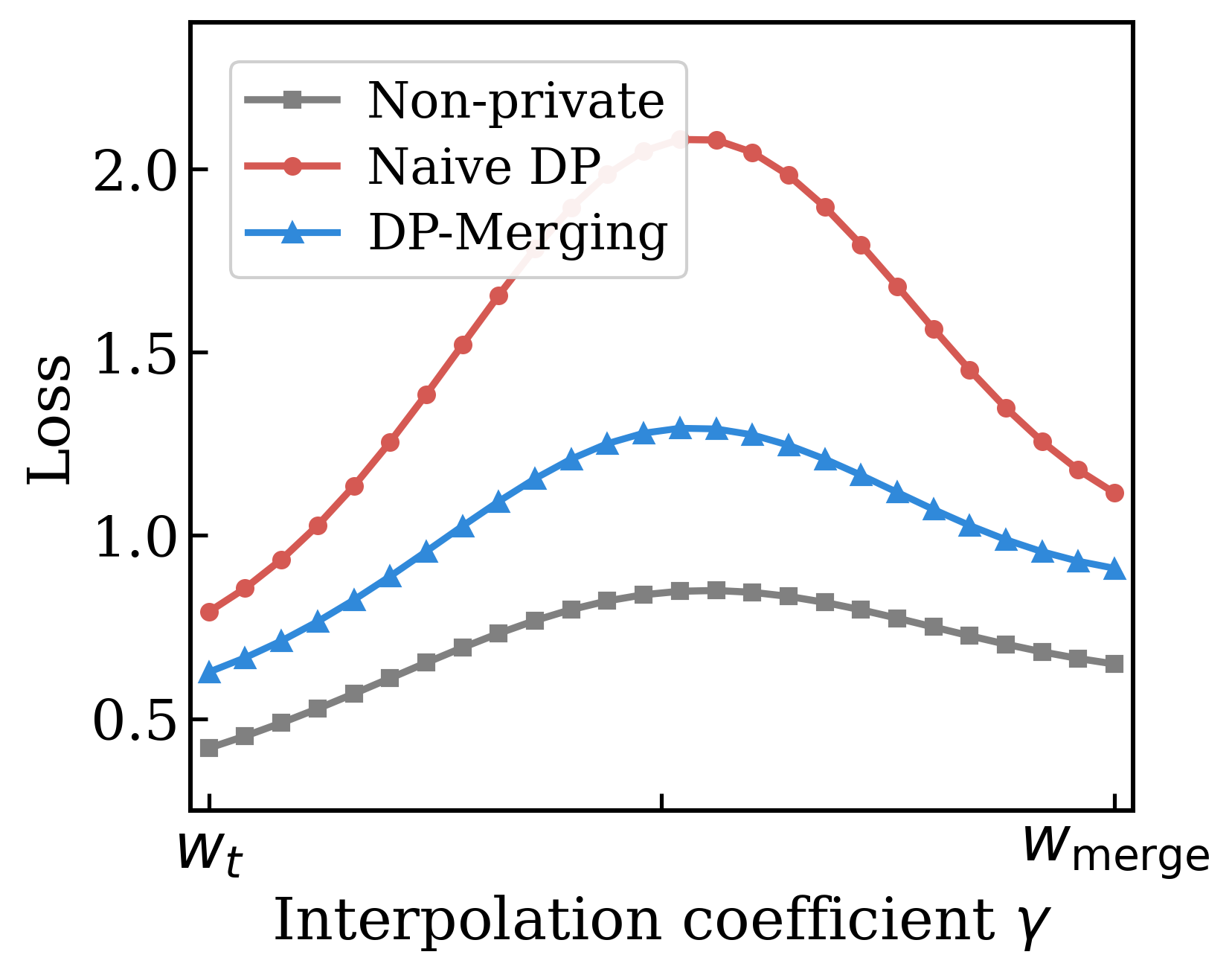}
        \caption{EuroSAT}
        \label{fig:bar_1}
    \end{subfigure}
    \begin{subfigure}[b]{0.24\textwidth}
        \centering
        \includegraphics[width=\textwidth]{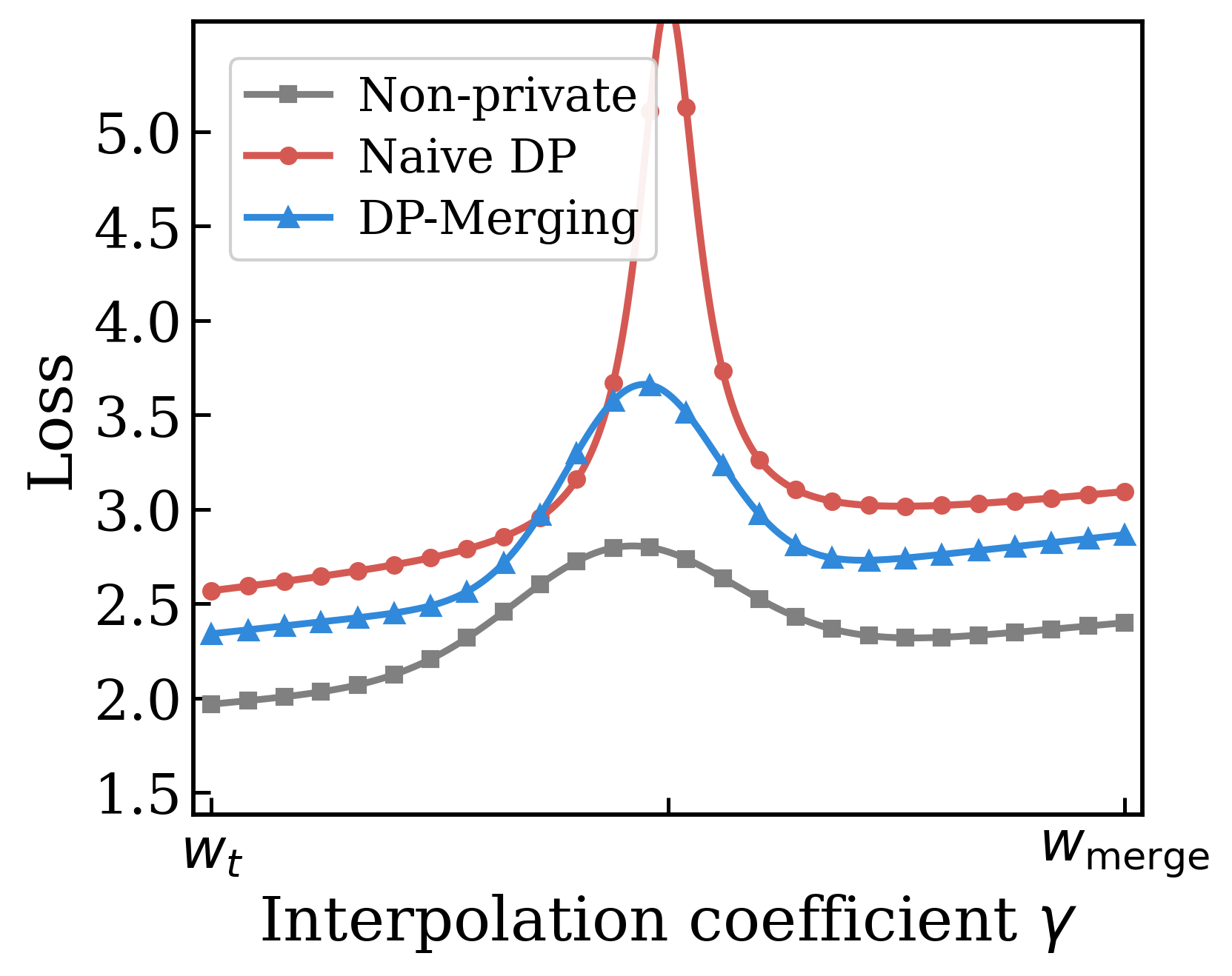}
        \caption{SUN397}
        \label{fig:bar_2}
    \end{subfigure}
    \begin{subfigure}[b]{0.24\textwidth}
        \centering
        \includegraphics[width=\textwidth]{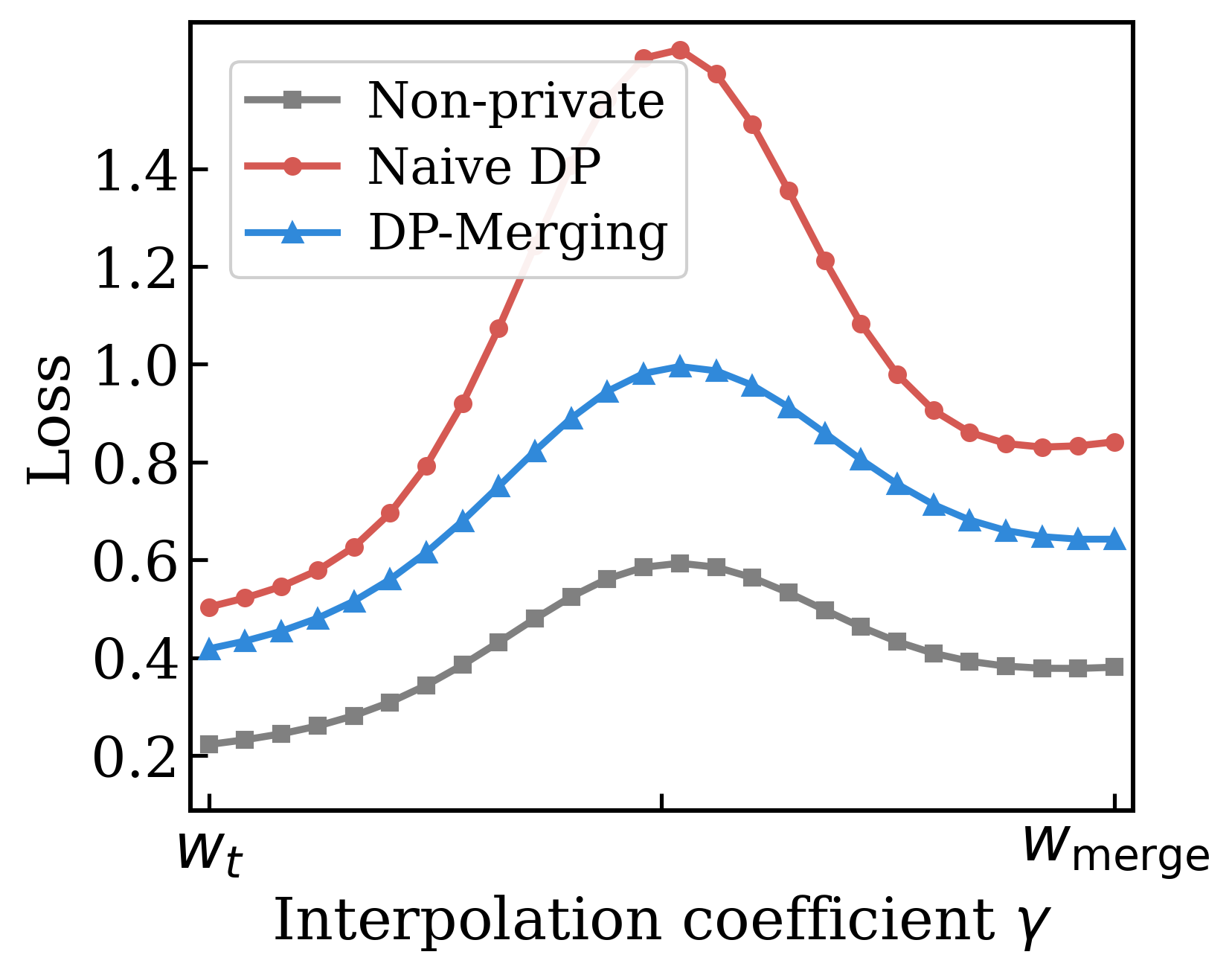}
        \caption{SST-2}
        \label{fig:bar_3}
    \end{subfigure}
        \begin{subfigure}[b]{0.24\textwidth}
        \centering
        \includegraphics[width=\textwidth]{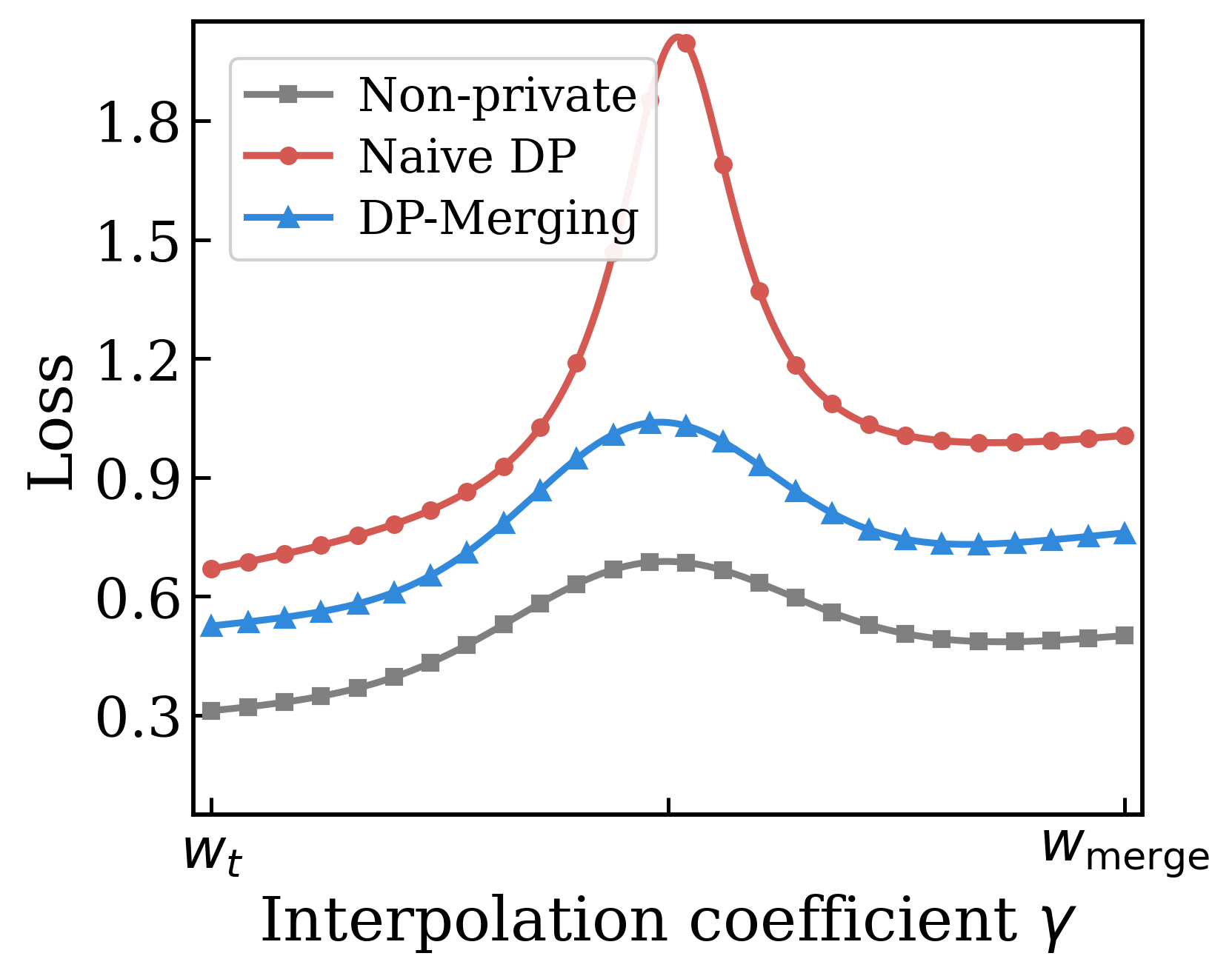}
        \caption{QNLI}
        \label{fig:bar_4}
    \end{subfigure}
\caption{
Interpolation loss landscapes from $w_t$ to $w_{\mathrm{merge}}$.
We evaluate test loss along $w(\gamma)=(1-\gamma)w_t+\gamma w_{\mathrm{merge}}$.
DP-Merging consistently reduces the sharp loss barriers induced by standard DP, yielding smoother paths across vision and language tasks.
}
    \label{fig:bar_all}
    \vspace{-3mm}
\end{figure*}

\subsection{Problem Setup}

\noindent
\textbf{Private Task-specific Fine-tuning.}
Let $w_0 \in \mathbb{R}^d$ denote a public pretrained foundation model.
We consider $T$ downstream tasks, where each task $t \in [T]$ is associated with a private dataset $\mathcal{D}_t$. Starting from the same initialization $w_0$, each task independently applies a randomized fine-tuning algorithm $\mathcal{A}_t$ to obtain a task-specific model $ w_t = \mathcal{A}_t(w_0,\mathcal{D}_t).$ For task $t$, we define the empirical loss as $    \mathcal{L}_t(w)
    :=
    \frac{1}{|\mathcal{D}_t|}
    \sum_{(x,y)\in\mathcal{D}_t}
    \ell(w;x,y),$ which serves as a proxy for the task risk.

\noindent
\textbf{Differential privacy.}
Each task-specific fine-tuning algorithm $\mathcal{A}_t$ is $(\varepsilon,\delta)$-differentially private with respect to its local dataset $\mathcal{D}_t$.
Specifically, for each task $t$, for any neighboring datasets $\mathcal{D}_t$ and $\mathcal{D}'_t$ differing in one example, and any measurable output set $\mathcal{S}$,
\begin{equation}
    \Pr[\mathcal{A}_t(w_0,\mathcal{D}_t) \in \mathcal{S}]
    \le
    e^\varepsilon
    \Pr[\mathcal{A}_t(w_0,\mathcal{D}'_t) \in \mathcal{S}]
    +
    \delta.
\end{equation}
Since each private example contributes to only one task, releasing $\{w_t\}_{t=1}^T$ preserves $(\varepsilon,\delta)$-DP by parallel composition.
Subsequent merging is a post-processing step over the released models and public information, and thus incurs no additional privacy loss.

\noindent
\textbf{Post-hoc Model Merging.}
Given the released task-specific models $\{w_t\}_{t=1}^T$, a post-hoc merging rule constructs a unified model
$
    w_{\mathrm{merge}} = \mathcal{M}(w_1,\dots,w_T).
$
Equivalently, with task vectors $\Delta_t := w_t - w_0$, we write $ w_{\mathrm{merge}}
    =
    w_0
    +
    \mathcal{M}_{\Delta}(\Delta_1,\dots,\Delta_T), $ $\mathcal{M}_{\Delta}$ is the merging rule in task-vector space. 
Our goal is a data-free, privacy-preserving merged model with low average task loss
$
    \frac{1}{T}\sum_{t=1}^T \mathcal{L}_t(w_{\mathrm{merge}}).
$

\vspace{-2mm}
\subsection{Geometric Challenges of Private Mergeability}
\label{subsec:geometry}
We use the merge gap as a diagnostic measure of private mergeability.
For task $t$, the task-wise merge gap is defined as
$\mathrm{Gap}_t(w_{\mathrm{merge}})
    :=
    \mathcal{L}_t(w_{\mathrm{merge}})
    -
    \mathcal{L}_t(w_t).$
A smaller $\mathrm{Gap}_t$ indicates better mergeability, while $\mathcal{L}_t$ is used solely as an evaluation diagnostic and is not available to the post-hoc merging rule.
Consider a second-order expansion of $\mathcal{L}_t$ around the task-specific model $w_t$:
\vspace{-1mm}
\begin{equation}
\label{eq:taylor}
\mathcal{L}_t(w_{\mathrm{merge}})
=
\mathcal{L}_t(w_t)
+
\nabla \mathcal{L}_t(w_t)^\top (w_{\mathrm{merge}}-w_t)
+
\frac{1}{2}
(w_{\mathrm{merge}}-w_t)^\top
H_t
(w_{\mathrm{merge}}-w_t)
+
R_t,
\end{equation}
where $H_t=\nabla^2\mathcal{L}_t(w_t)$ and $R_t$ collects higher-order terms.
This expansion suggests that the merge gap increases with both the local curvature around $w_t$ and the displacement from $w_t$ to $w_{\mathrm{merge}}$:
\begin{equation}
\label{eq:merge_bound}
\mathrm{Gap}_t(w_{\mathrm{merge}})
\approx
\frac{1}{2}
\underbrace{\lambda_{\max}(H_t)}_{\text{local curvature}}
\underbrace{
\|w_{\mathrm{merge}}-w_t\|_2^2
}_{\text{merging displacement}} .
\end{equation}
To visualize the loss barrier induced by merging, we evaluate the loss
along the interpolation path
$w(\gamma)=(1-\gamma)w_t+\gamma w_{\mathrm{merge}}$.
Figure~\ref{fig:bar_all} shows that Naive DP produces sharper and higher loss barriers, whereas DP-Merging yields smoother interpolation paths.
\vspace{-1mm}
\paragraph{Challenge 1: local sharpness.}
DP fine-tuning perturbs the optimization trajectory through gradient clipping and Gaussian noise.
Under tight privacy budgets, the released task model may become more sensitive to parameter perturbations.
This is harmful for merging because the merged model induces a displacement away from the task-specific solution.
We measure this effect using a perturbation-based sharpness proxy:
$
\mathrm{Sharp}_t(w_t;\rho)
=
\mathcal{L}_t
\left(
w_t+
\rho
\frac{\nabla\mathcal{L}_t(w_t)}
{\|\nabla\mathcal{L}_t(w_t)\|_2}
\right)
-
\mathcal{L}_t(w_t)
$.
Figure~\ref{fig:challenge_sharpness} shows that, averaged over the eight vision tasks, Naive DP exhibits larger local sharpness around task-specific models under tighter privacy budgets, while DP-Merging consistently reduces this sensitivity.

\begin{wrapfigure}{r}{0.5\textwidth}
  \centering
  \vspace{-4mm}

  \begin{subfigure}[b]{0.49\linewidth}
    \centering
    \includegraphics[width=\linewidth]{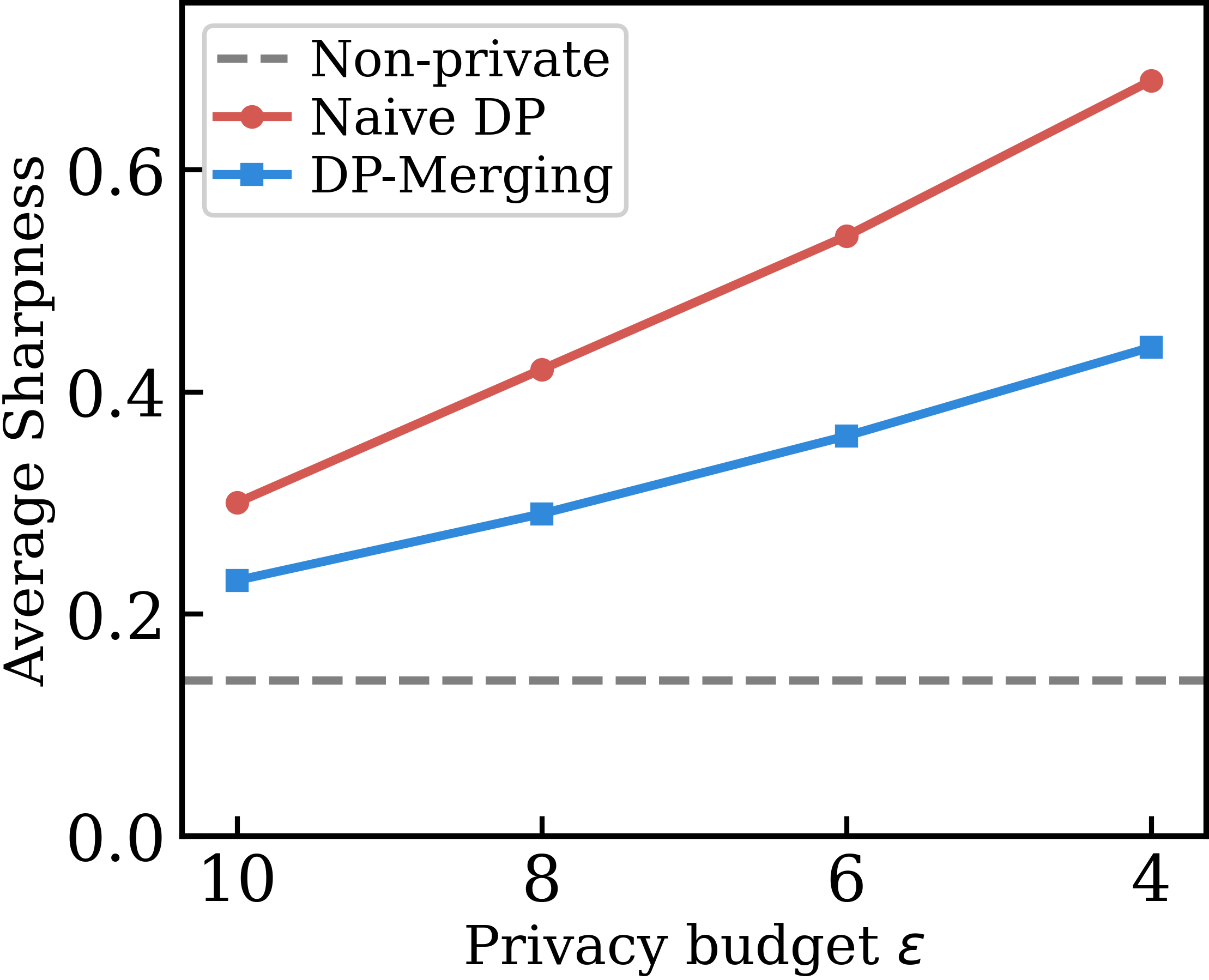}
    \caption{Average Sharpness}
    \label{fig:challenge_sharpness}
  \end{subfigure}
  \hfill
  \begin{subfigure}[b]{0.49\linewidth}
    \centering
    \includegraphics[width=\linewidth]{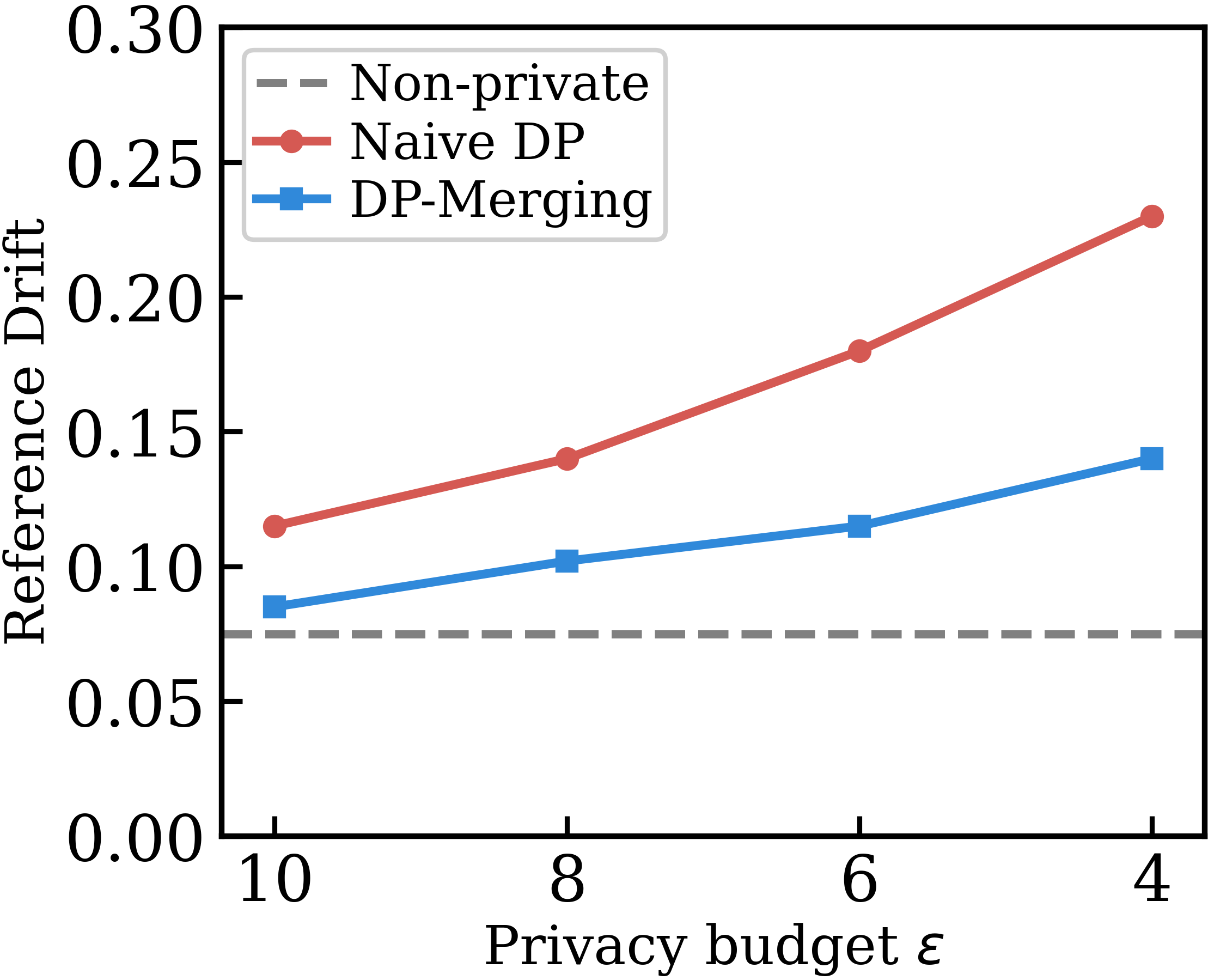}
    \caption{Reference Drift}
    \label{fig:challenge_drift}
  \end{subfigure}
  \caption{
  \small
  Empirical validation on the eight vision tasks using CLIP ViT-B/32.
  }
  \label{fig:challenge_2}
  \vspace{-6mm}
\end{wrapfigure}

\vspace{-1mm}
\paragraph{Challenge 2: reference drift.}
The displacement term in Eq.~\eqref{eq:merge_bound} is affected by how far private task models move away from the shared pretrained initialization. Consider weight averaging, where
$w_{\mathrm{merge}}=w_0+\bar{\Delta}$ and
$\bar{\Delta}=\frac{1}{T}\sum_{j=1}^{T}\Delta_j$.
Then $w_{\mathrm{merge}}-w_t
=
\bar{\Delta}-\Delta_t . $
Thus, merging displacement grows when task updates are large or poorly aligned.
Indeed,
$
\|\bar{\Delta}-\Delta_t\|_2
\le
\frac{1}{T}\sum_{j=1}^{T}\|\Delta_j\|_2+\|\Delta_t\|_2,
$
showing that large drift from $w_0$ can enlarge merge-induced displacement.
We quantify reference drift as
$
\mathrm{Drift}
=
\frac{1}{T}\sum_{t=1}^{T}
\frac{\|w_t-w_0\|_2}{\|w_0\|_2}.
$
As shown in Figure~\ref{fig:challenge_drift}, Naive DP exhibits larger reference drift under stronger privacy constraints, whereas DP-Merging limits this drift.

\section{DP-Merging}
\label{sec:method}
The analysis in Sec.~\ref{subsec:geometry} identifies two obstacles to private mergeability:
(1) local sharpness of private task-specific solutions, and
(2) reference drift from the shared pretrained initialization.
Based on this insight, we propose \textbf{DP-Merging}, a geometry-aware framework for differentially private model merging.
The central idea is to train private task models that are inherently robust to the parameter perturbations induced by merging.
DP-Merging consists of two complementary components:
(1) \emph{DP-compatible sharpness-aware fine-tuning}, which constructs sharpness-aware updates using clipped noisy gradients, and
(2) \emph{reference-anchored alignment}, which limits the displacement between task models through reference anchoring to the pretrained initialization $w_0$.
Intuitively, flatness controls how harmful a displacement is, while anchoring controls how large the displacement becomes.

\subsection{DP-Compatible Sharpness-Aware Fine-Tuning}
\label{subsec:flatness}

We first address the curvature term in Eq.~\eqref{eq:merge_bound}.
When a task model lies in a sharp local region, even a moderate parameter displacement can produce a large increase in task loss after merging.
To mitigate this effect, DP-Merging follows a sharpness-aware principle~\cite{foret2021sharpnessaware}: instead of optimizing the loss only at the current model, it optimizes the loss in a nearby adversarial neighborhood.
As a result, the learned solution becomes locally stable against merge-induced perturbations.

We use flatness not merely to improve the standalone task model, but to make the released model robust to the parameter shift it will undergo during merging. For a model $w$ and minibatch $B$, define the clipped Gaussian gradient estimator
\vspace{-1mm}
\begin{equation}
\label{eq:private_gradient}
\tilde g(w;B)
=
\frac{1}{|B|}
\left(
\sum_{z_i\in B}
\frac{g_i}{\max\{1,\|g_i\|_2/C\}}
+
\mathcal{N}(0,\sigma^2 C^2 I)
\right),
\quad
g_i=\nabla \ell(w;z_i)
\end{equation}
where $C$ is the clipping threshold and $\sigma$ is the noise multiplier. At iteration $k$ for task $t$, we first compute the private gradient $\tilde g_{t,k}=\tilde g(w_{t,k};B_{t,k})$
and use it to construct a local ascent perturbation
\vspace{-1mm}
\begin{equation}
\label{eq:private_sam_eps}
\epsilon_{t,k}
=
\rho_t
\frac{\tilde g_{t,k}}{\|\tilde g_{t,k}\|_2},
\end{equation}
where $\rho_t$ controls the neighborhood size.
We then evaluate a private gradient at the perturbed point:
\vspace{-1mm}
\begin{equation}
\label{eq:private_sam_grad}
\tilde h_{t,k}
=
\tilde g(w_{t,k}+\epsilon_{t,k};B_{t,k}).
\end{equation}

The first gradient $\tilde g_{t,k}$ identifies a nearby high-loss direction, while the second gradient $\tilde h_{t,k}$ updates the model against the loss at that perturbed location.
Consequently, the optimization no longer favors solutions that are only locally optimal at a single point, but instead prefers solutions whose loss remains stable within a neighborhood.
Both gradient evaluations are privatized using clipped Gaussian mechanisms and are accounted for in the privacy analysis.
%The additional sharpness-aware step increases only local computation and does not change the communication structure of model merging.

\subsection{Reference-Anchored Alignment}
\label{subsec:alignment}

Flatness reduces local sensitivity, but it does not ensure that independently trained task models stay close to one another.
We therefore add a simple anchor to the shared initialization.
At iteration $k$, DP-Merging updates task $t$ by
\vspace{-2mm}
\begin{equation}
\label{eq:dp_merging_update}
w_{t,k+1}
=
w_{t,k}
-
\eta
\left(
\tilde h_{t,k}
+
2\lambda(w_{t,k}-w_0)
\right),
\end{equation}
where $\lambda$ controls the alignment strength.
The second term is the gradient of $\lambda\|w_{t,k}-w_0\|_2^2$.
Since it depends only on model parameters and the public initialization, it incurs no additional privacy cost.

This anchor is useful because it controls the displacement term in Eq.~\eqref{eq:merge_bound}.
For uniform averaging, let
$w_{\mathrm{merge}}=\frac{1}{T}\sum_{s=1}^{T}w_s$.
Then
\vspace{-2mm}
\begin{equation}
\label{eq:anchor_displacement}
\scalebox{0.9}{$
\begin{aligned}
\|w_{\mathrm{merge}}-w_t\|_2
&=
\left\|\frac{1}{T}\sum_{s=1}^{T}(w_s-w_t)\right\|_2
\le
\frac{1}{T}\sum_{s=1}^{T}\|w_s-w_t\|_2 \\
&\le
\frac{1}{T}\sum_{s=1}^{T}
\left(\|w_s-w_0\|_2+\|w_t-w_0\|_2\right)
\end{aligned}
$}
\end{equation}
Thus, keeping each task model close to $w_0$ reduces an upper bound on the distance between the merged model and each task-specific solution.
This alignment is induced only through the common reference point and does not require communication between tasks during fine-tuning.

After private fine-tuning, we obtain released task models
$\{w_t^{\mathrm{DP}}\}_{t=1}^{T}$.
For clarity, we use weight averaging as the default merging rule: $w_{\mathrm{merge}}
=
\frac{1}{T}
\sum_{t=1}^{T}
w_t^{\mathrm{DP}}.$
The same released models can also be used with other post-hoc merging rules.
Since the final merging step only processes DP outputs, it is post-processing and preserves the privacy guarantees of task-specific fine-tuning.

\begin{algorithm}[t]
\caption{DP-Merging}
\label{alg:dp_sam_merging}
\renewcommand{\arraystretch}{1.12}
\begin{tabular}{@{}>{\scriptsize}r@{\quad}p{0.88\linewidth}@{}}

1: & \textbf{Input:} pretrained model $w_0$; private datasets $\{\mathcal{D}_t\}_{t=1}^{T}$; iterations $K$; learning rate $\eta$; perturbation radius $\rho_t$; alignment strength $\lambda$; clipping threshold $C$; noise multiplier $\sigma$; sampling rate $q$ \\

2: & \textbf{Output:} merged model $w_{\mathrm{merge}}$ \\[0.3em]

3: & \textbf{for} each task $t=1,\dots,T$ \textbf{do} \\

4: & \quad Initialize task model $w_{t,0}\leftarrow w_0$ \\

5: & \quad \textbf{for} $k=0,\dots,K-1$ \textbf{do} \\

6: & \cellcolor{flatbg}\quad\quad Sample minibatch $B_{t,k}\subset\mathcal{D}_t$ with sampling rate $q$ \\

7: & \cellcolor{flatbg}\quad\quad Compute private gradient $\tilde g_{t,k}\leftarrow \tilde g(w_{t,k};B_{t,k})$ using Eq.~\eqref{eq:private_gradient} \\

8: & \cellcolor{flatbg}\quad\quad Construct perturbation $\epsilon_{t,k}\leftarrow \rho_t\tilde g_{t,k}/\|\tilde g_{t,k}\|_2$ \\

9: & \cellcolor{flatbg}\quad\quad Compute perturbed private gradient $\tilde h_{t,k}\leftarrow \tilde g(w_{t,k}+\epsilon_{t,k};B_{t,k})$ \\

10: & \cellcolor{alignbg}\quad\quad Update with reference anchoring:
$w_{t,k+1}
\leftarrow
w_{t,k}
-
\eta
\bigl(
\tilde h_{t,k}
+
2\lambda(w_{t,k}-w_0)
\bigr)$ \\

11: & \quad \textbf{end for} \\

12: & \quad Obtain private task model $w_t^{\mathrm{DP}}\leftarrow w_{t,K}$ \\

13: & \textbf{end for} \\[0.2em]

14: & $w_{\mathrm{merge}}\leftarrow \frac{1}{T}\sum_{t=1}^{T}w_t^{\mathrm{DP}}$ \\

15: & \textbf{return} $w_{\mathrm{merge}}$ \\

\end{tabular}
\end{algorithm}
\section{Theoretical Analysis}\label{sec:theory}
\subsection{Privacy guarantee of DP-Merging}
\label{subsec:privacy_dp_merging}

We analyze the privacy guarantee of DP-Merging in
Appendix~\ref{app:privacy}. Each iteration uses two privatized
clipped-gradient evaluations, while the sharpness perturbation, reference anchor,
and final merging step are post-processing operations. Therefore, they incur no
additional privacy loss beyond the underlying private gradient evaluations.

\begin{theorem}[Privacy guarantee of DP-Merging]
\label{thm:privacy_budget}
Assume that each task dataset $\mathcal{D}_t$ is disjoint, and each private
example belongs to at most one task. For task $t$, suppose Algorithm~\ref{alg:dp_sam_merging}
runs for $K$ iterations with Poisson sampling rate $q$, clipping threshold $C$,
and Gaussian noise multiplier $\sigma$. Let
$\varepsilon_{\mathrm{pair}}(\alpha;q,\sigma)$ denote the order-$\alpha$ RDP~\cite{mironov2017renyi} cost
of one Poisson-subsampled paired Gaussian mechanism that releases the two noisy
clipped-gradient quantities used in one DP-Merging iteration. Then, for any
R\'enyi order $\alpha>1$, the released task model $w_t^{\mathrm{DP}}$ satisfies
$
(\alpha,\, K\varepsilon_{\mathrm{pair}}(\alpha;q,\sigma))\text{-RDP}.
$
Consequently, for any $\delta_t\in(0,1)$, $w_t^{\mathrm{DP}}$ satisfies
$(\varepsilon_t,\delta_t)$-DP with
\vspace{-1mm}
\begin{equation}
\small
\varepsilon_t
=
\min_{\alpha>1}
\left\{
K\varepsilon_{\mathrm{pair}}(\alpha;q,\sigma)
+
\frac{\log(1/\delta_t)}{\alpha-1}
\right\}.
\end{equation}
Since the task datasets are disjoint, releasing all private task models
$\{w_t^{\mathrm{DP}}\}_{t=1}^{T}$ satisfies
$
\left(
\max_{t\in[T]}\varepsilon_t,\,
\max_{t\in[T]}\delta_t
\right)\text{-DP}.
$
The merged model
$
w_{\mathrm{merge}}
=
\frac{1}{T}\sum_{t=1}^{T} w_t^{\mathrm{DP}}
$
incurs no additional privacy loss by post-processing.
\end{theorem}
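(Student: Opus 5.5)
The proof is a standard RDP accounting argument in four steps: a per-iteration mechanism bound, adaptive composition over $K$ iterations, conversion to $(\varepsilon,\delta)$-DP, and then parallel composition across tasks followed by post-processing.

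\textbf{Step 1: one iteration as a single mechanism.} Fix task $t$. I would write iteration $k$ as one mechanism $\mathcal{M}_k$. Given the current state $w_{t,k}$, it draws a Poisson-subsampled batch $B_{t,k}$ and outputs the pair $(\tilde g_{t,k},\tilde h_{t,k})$.

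The second query point $w_{t,k}+\epsilon_{t,k}$ is a deterministic function of $w_{t,k}$ and the already released $\tilde g_{t,k}$. So $\tilde h_{t,k}$ is an adaptively chosen Gaussian query on the same batch. Each query is a sum of per-example gradients clipped to norm $C$, so its $\ell_2$-sensitivity under add/remove of one example is $C$. Each query receives independent noise $\mathcal{N}(0,\sigma^2C^2I)$.

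By definition of $\varepsilon_{\mathrm{pair}}(\alpha;q,\sigma)$ as the RDP cost of the Poisson-subsampled paired Gaussian mechanism, $\mathcal{M}_k$ is $(\alpha,\varepsilon_{\mathrm{pair}})$-RDP.

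The main obstacle is this step: checking that the paired mechanism is well defined when the second query depends on the first output. I would handle it by viewing the pair as the adaptive composition of two Gaussian queries on the same subsampled batch. Subsampling is applied once per iteration, not once per query, which is exactly why the statement uses a paired cost. One can also bound $\varepsilon_{\mathrm{pair}}$ above by the subsampled Gaussian RDP with effective noise $\sigma/\sqrt2$, since the two adaptive sensitivity-$C$ queries together behave like a single query of sensitivity $\sqrt2 C$. This bound is not needed, however, because the statement takes $\varepsilon_{\mathrm{pair}}$ as given.

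\textbf{Step 2: composition over $K$ iterations.} The update $w_{t,k+1}=w_{t,k}-\eta(\tilde h_{t,k}+2\lambda(w_{t,k}-w_0))$ is a deterministic function of the previous state, the released pair, and the public $w_0$. Hence the whole trajectory is an adaptive composition of $\mathcal{M}_0,\dots,\mathcal{M}_{K-1}$.

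By the adaptive composition theorem for RDP, the trajectory is $(\alpha,K\varepsilon_{\mathrm{pair}})$-RDP. Since $w_t^{\mathrm{DP}}=w_{t,K}$ is a function of the trajectory, RDP post-processing gives the same guarantee for $w_t^{\mathrm{DP}}$.

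\textbf{Step 3: conversion to $(\varepsilon,\delta)$-DP.} Apply the standard RDP-to-DP conversion: $(\alpha,\epsilon)$-RDP implies $(\epsilon+\log(1/\delta_t)/(\alpha-1),\delta_t)$-DP for every $\alpha>1$. Taking the infimum over $\alpha$ yields the stated $\varepsilon_t$.

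\textbf{Step 4: all tasks and the merged model.} Take neighboring joint datasets that differ in one example. By disjointness, that example lies in only one $\mathcal{D}_{t^*}$. The other task models are computed from identical data with independent randomness, so their output distributions are unchanged.

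The joint output distribution therefore factorizes into the $t^*$ marginal times distributions that are equal on both sides. Conditioning on the other outputs, the $(\varepsilon_{t^*},\delta_{t^*})$ guarantee transfers to the tuple, and this is bounded by $(\max_t\varepsilon_t,\max_t\delta_t)$.

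Finally, $w_{\mathrm{merge}}$ is a deterministic function of the released tuple, so the post-processing property of DP gives the last claim.
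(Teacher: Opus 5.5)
Your proposal is correct and follows essentially the same route as the paper. The perturbation $\epsilon_{t,k}$ and the anchor term are post-processing, and the second query is an adaptively chosen sensitivity-$C$ Gaussian query on the same Poisson batch, so amplification is applied once to the pair; then RDP composes additively over $K$ iterations, the standard conversion gives $\varepsilon_t$, and disjointness plus post-processing handle the released collection and $w_{\mathrm{merge}}$. The paper additionally makes $\varepsilon_{\mathrm{pair}}$ explicit (joint sensitivity $\sqrt{2}C$, full-batch cost $\alpha/\sigma^2$, a binomial bound with exponent $j(j-1)/\sigma^2$ for integer $\alpha$ and $-\log(1-q)$ for the reverse direction, consistent with your effective-noise $\sigma/\sqrt{2}$ remark), and it explicitly replaces the $1/|B|$ normalization by a deterministic factor, which your ``sum of clipped gradients'' framing tacitly assumes as well.
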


\subsection{Mergeability Analysis}
\label{sec:theory}

We provide a theoretical analysis explaining why DP-Merging improves the mergeability of
differentially private task models. %The analysis formalizes the intuition in Sec.~\ref{sec:preliminaries}: the loss increase caused by merging is controlled by two factors, namely the local curvature around each private task model and the distance between the merged model and that task model. DP-Merging acts on these two factors directly: the sharpness-aware update encourages smaller local curvature, while the reference anchor reduces task-vector drift from the shared initialization.
Let $w_t := w_t^{\mathrm{DP}}$ be the private task model returned by Algorithm~\ref{alg:dp_sam_merging},
and define the task vector $u_t := w_t-w_0$. For weight averaging,
$
\bar u := \frac{1}{T}\sum_{s=1}^{T}u_s,
w_{\mathrm{merge}} := w_0+\bar u,
\Delta_t := w_{\mathrm{merge}}-w_t=\bar u-u_t .
$
We measure mergeability by the average merge gap
\vspace{-2mm}
\begin{equation}
\small
G_{\mathrm{merge}}
:=
\frac{1}{T}\sum_{t=1}^{T}
\bigl(\mathcal{L}_t(w_{\mathrm{merge}})-\mathcal{L}_t(w_t)\bigr).
\end{equation}

\begin{theorem}[Mergeability bound for DP-Merging]
\label{thm:mergeability}
Assume that, for each task $t$, $\mathcal{L}_t$ is three-times differentiable along the segment
$\{w_t+\gamma\Delta_t:\gamma\in[0,1]\}$. Let
$
\beta_t :=
\sup_{\gamma\in[0,1]}
\lambda_{\max}\!\left(\nabla^2 \mathcal{L}_t(w_t+\gamma\Delta_t)\right),
$
and assume that the third-order Taylor remainder is bounded by
$
|R_t|
\le
\frac{M_t}{6}\|\Delta_t\|_2^3 .
$
If the returned private task model satisfies
$
\|\nabla \mathcal{L}_t(w_t)\|_2 \le \varepsilon_t ,
$
then
\vspace{-2mm}
\begin{equation}
\small
G_{\mathrm{merge}}
\le
\frac{1}{T}\sum_{t=1}^{T}
\left[
\varepsilon_t\|\Delta_t\|_2
+
\frac{\beta_t}{2}\|\Delta_t\|_2^2
+
\frac{M_t}{6}\|\Delta_t\|_2^3
\right].
\end{equation}
\end{theorem}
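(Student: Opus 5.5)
The bound follows from a task-wise Taylor expansion along the merging segment, followed by averaging over tasks. Fix a task $t$ and set $\Delta_t = w_{\mathrm{merge}} - w_t = \bar u - u_t$. Since $\mathcal{L}_t$ is three-times differentiable along $\{w_t+\gamma\Delta_t:\gamma\in[0,1]\}$, we expand $\mathcal{L}_t(w_t+\Delta_t)$ around $w_t$ exactly as in Eq.~\eqref{eq:taylor}, writing the second-order term with the Hessian at a point of the segment. Concretely, we will use the integral (or mean-value) form of the remainder:
\[
\mathcal{L}_t(w_{\mathrm{merge}})-\mathcal{L}_t(w_t)
=\nabla\mathcal{L}_t(w_t)^\top\Delta_t+\tfrac12\Delta_t^\top\nabla^2\mathcal{L}_t(w_t)\Delta_t+R_t .
\]
Here $R_t$ is the third-order remainder, which the hypothesis lets us bound directly by $\tfrac{M_t}{6}\|\Delta_t\|_2^3$.

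Next we bound each of the three terms. For the first-order term, Cauchy--Schwarz together with the near-stationarity assumption $\|\nabla\mathcal{L}_t(w_t)\|_2\le\varepsilon_t$ gives
\[
\nabla\mathcal{L}_t(w_t)^\top\Delta_t\le\varepsilon_t\|\Delta_t\|_2 .
\]
For the quadratic term, the Rayleigh quotient bound gives $\Delta_t^\top\nabla^2\mathcal{L}_t(w_t)\Delta_t\le\lambda_{\max}(\nabla^2\mathcal{L}_t(w_t))\|\Delta_t\|_2^2$. Since $w_t$ is the $\gamma=0$ endpoint of the segment, this eigenvalue is at most $\beta_t$, so the quadratic term is at most $\tfrac{\beta_t}{2}\|\Delta_t\|_2^2$. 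The remainder term is handled by $R_t\le|R_t|\le\tfrac{M_t}{6}\|\Delta_t\|_2^3$. Combining the three estimates yields the task-wise bound
\[
\mathrm{Gap}_t\le\varepsilon_t\|\Delta_t\|_2+\tfrac{\beta_t}{2}\|\Delta_t\|_2^2+\tfrac{M_t}{6}\|\Delta_t\|_2^3 .
\]
Averaging this over $t\in[T]$ and using linearity of the average gives the claimed bound on $G_{\mathrm{merge}}$.

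The only delicate point is consistency between the form of the remainder assumed in the statement and the expansion we use. If $R_t$ is taken relative to the Hessian at $w_t$, the argument above is exact. If instead one prefers a second-order Lagrange form, with the Hessian at an intermediate point $w_t+\xi\Delta_t$ and no cubic term, the supremum definition of $\beta_t$ over the whole segment still gives the quadratic bound, and the cubic term is then simply a nonnegative slack. Either way the inequality holds, so I expect no real obstacle beyond stating which expansion is used.

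I would also remark, without proof, that $\|\Delta_t\|_2$ can be further controlled by Eq.~\eqref{eq:anchor_displacement}, which ties the bound to reference drift; the anchoring term is what keeps this quantity small.
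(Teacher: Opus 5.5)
Your proof is correct and follows the paper's argument. You Taylor-expand $\mathcal{L}_t$ at $w_t$ along $\Delta_t$, bound the linear term by Cauchy--Schwarz with $\|\nabla\mathcal{L}_t(w_t)\|_2\le\varepsilon_t$, the quadratic term by $\frac{\beta_t}{2}\|\Delta_t\|_2^2$, and the remainder by $\frac{M_t}{6}\|\Delta_t\|_2^3$, then average over $t$. The only difference is cosmetic: the paper evaluates the Hessian at an intermediate point $w_t+\gamma_t\Delta_t$ while still carrying $R_t$, and your remark about taking the supremum over the whole segment already covers that case.
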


Theorem~\ref{thm:mergeability} shows that the merge gap decreases when either the local curvature
$\beta_t$ is small or the merging displacement $\|\Delta_t\|_2$ is small. This directly matches the
two design choices of DP-Merging. The DP-compatible sharpness-aware update reduces sensitivity
to local perturbations, thereby targeting the curvature term. The reference-anchored regularizer
controls the task-vector norm $\|w_t-w_0\|_2$, which in turn controls $\|\Delta_t\|_2$.

To make the role of the anchor explicit, define the robust loss
$
L_{t,\rho_t}(w)
:=
\max_{\|\epsilon\|_2\le \rho_t}
\mathcal{L}_t(w+\epsilon),
$
and the regularized robust objective approximately optimized by DP-Merging:
$
\Phi_t(w)
:=
L_{t,\rho_t}(w)
+
\lambda\|w-w_0\|_2^2 .
$
Suppose the returned model satisfies
$
\|\nabla L_{t,\rho_t}(w_t)\|_2 \le G_t,
\|\nabla \Phi_t(w_t)\|_2 \le \zeta_t .
$
Then the anchor gives
$
\|w_t-w_0\|_2
\le
\frac{G_t+\zeta_t}{2\lambda}.
$
Consequently, if
$
A_t := \frac{G_t+\zeta_t}{2\lambda},
\bar A := \frac{1}{T}\sum_{s=1}^{T}A_s,
$
then
$
\|\Delta_t\|_2
=
\|\bar u-u_t\|_2
\le
A_t+\bar A .
$
Substituting this into Theorem~\ref{thm:mergeability} yields
\vspace{-2mm}
\begin{equation}
\small
G_{\mathrm{merge}}
\le
\frac{1}{T}\sum_{t=1}^{T}
\left[
\varepsilon_t(A_t+\bar A)
+
\frac{\beta_t}{2}(A_t+\bar A)^2
+
\frac{M_t}{6}(A_t+\bar A)^3
\right].
\end{equation}
In the uniform case where
$\varepsilon_t\le\varepsilon$, $\beta_t\le\beta$, $M_t\le M$,
$G_t\le G$, and $\zeta_t\le\zeta$ for all $t$, we obtain the simplified bound
\vspace{-2mm}
\begin{equation}
\small
G_{\mathrm{merge}}
\le
\varepsilon\frac{G+\zeta}{\lambda}
+
\frac{\beta}{2}\left(\frac{G+\zeta}{\lambda}\right)^2
+
\frac{M}{6}\left(\frac{G+\zeta}{\lambda}\right)^3 .
\end{equation}
This bound explains DP-Merging’s improved private mergeability: the sharpness-aware component
reduces $\beta$, and the anchor increases geometric compatibility by reducing the displacement
scale $(G+\zeta)/\lambda$.

\begin{table*}[h]
\centering
\caption{Multi-task accuracy (\%) on the 8-task vision benchmark with ViT-B/32, $\varepsilon=4$.}
\label{tab:performance_vitbase32}
\renewcommand\arraystretch{0.95}
\setlength{\tabcolsep}{5pt}
\resizebox{\linewidth}{!}{
\begin{tabular}{l|cccccccc|c}
\toprule
\textbf{Method}
& \textbf{SUN397}
& \textbf{Cars}
& \textbf{RESISC45}
& \textbf{EuroSAT}
& \textbf{SVHN}
& \textbf{GTSRB}
& \textbf{MNIST}
& \textbf{DTD}
& \textbf{Avg Acc} \\
\midrule
DP + Individual
& 59.2 & 67.7 & 86.1 & 89.7 & 87.5 & 88.6 & 89.3 & 69.4 & 79.7 \\
\midrule
DP + Weight Averaging
& 45.3 & 43.4 & 51.9 & 52.7 & 43.2 & 33.8 & 67.1 & 30.9 & 46.0 \\
\rowcolor{flatbg}
DP-Merging + Weight Averaging
& 48.2 & 49.5 & 56.3 & 57.5 & 43.2 & 37.4 & 69.0 & 36.3 & $49.7_{\textcolor{red}{+3.7}}$ \\

DP + RegMean
& 45.8 & 44.9 & 58.3 & 58.6 & 59.1 & 48.3 & 73.7 & 32.0 & 52.6 \\
\rowcolor{flatbg}
DP-Merging + RegMean
& 49.4 & 48.6 & 63.1 & 62.4 & 63.6 & 53.1 & 76.8 & 38.2 & $56.9_{\textcolor{red}{+4.3}}$ \\

DP + Task Arithmetic
& 45.2 & 44.9 & 56.7 & 58.9 & 60.2 & 59.7 & 77.3 & 30.4 & 54.2 \\
\rowcolor{flatbg}
DP-Merging + Task Arithmetic
& 48.9 & 49.2 & 59.6 & 62.4 & 67.1 & 62.5 & 79.6 & 35.1 & $58.1_{\textcolor{red}{+3.9}}$ \\

DP + TIES-Merging
& 49.8 & 48.6 & 50.7 & 59.7 & 66.5 & 72.1 & 78.3 & 34.2 & 57.5 \\
\rowcolor{flatbg}
DP-Merging + TIES-Merging
& 52.9 & 53.7 & 56.0 & 64.1 & 72.1 & 78.0 & 81.4 & 38.6 & $62.1_{\textcolor{red}{+4.6}}$\\

DP + PCB Merging
& 45.1 & 43.4 & 50.5 & 57.2 & 61.7 & 70.9 & 77.0 & 30.3 & 54.5 \\
\rowcolor{flatbg}
DP-Merging + PCB Merging
& 49.5 & 48.2 & 56.4 & 63.4 & 65.0 & 76.3 & 79.1 & 35.4 & $59.2_{\textcolor{red}{+4.7}}$ \\

DP + WUDI-Merging
& 47.6 & 45.5 & 58.5 & 59.3 & 66.4 & 57.1 & 78.2 & 39.1 & 56.5 \\
\rowcolor{flatbg}
DP-Merging + WUDI-Merging
& 51.5 & 48.6 & 63.1 & 63.9 & 70.4 & 61.2 & 82.7 & 44.8 & $60.8_{\textcolor{red}{+4.3}}$ \\
\bottomrule
\end{tabular}
}
\end{table*}
\begin{table*}[h]
\centering
\caption{Multi-task accuracy (\%) on the 8-task vision benchmark with ViT-L/14, $\varepsilon=4$.}
\label{tab:performance_vit_l_14} 
\renewcommand\arraystretch{0.95}
\setlength{\tabcolsep}{5pt}
\resizebox{\linewidth}{!}{  
\begin{tabular}{l|cccccccc|c}
\toprule
\textbf{Method}  &  \textbf{SUN397}  &  \textbf{Cars}  &  \textbf{RESISC45}  &  \textbf{EuroSAT}  &  \textbf{SVHN}  &  \textbf{GTSRB}  &  \textbf{MNIST}  &  \textbf{DTD}  & \textbf{Avg Acc}  \\
\midrule
DP + Individual
& 64.8 & 74.2 & 93.1 & 92.9 & 92.2 & 95.4 & 96.7 & 75.6 & 85.5 \\
\midrule
DP + Weight Averaging
& 50.3 & 49.4 & 57.4 & 57.7 & 50.2 & 39.8 & 72.5 & 35.1 & 51.6 \\
\rowcolor{flatbg}
DP-Merging + Weight Averaging
& 54.4 & 54.4 & 62.1 & 62.2 & 55.4 & 45.4 & 80.1 & 44.2 & $57.3_{\textcolor{red}{+5.7}}$ \\

DP + RegMean
& 50.3 & 49.8 & 61.0 & 64.6 & 63.2 & 53.4 & 79.7 & 41.6 & 57.9 \\
\rowcolor{flatbg}
DP-Merging + RegMean
& 53.3 & 54.8 & 65.9 & 70.6 & 69.2 & 55.0 & 82.7 & 49.6 & $62.7_{\textcolor{red}{+4.8}}$ \\

DP + Task Arithmetic
& 50.2 & 50.9 & 61.7 & 73.5 & 75.4 & 54.7 & 82.3 & 45.4 & 61.8 \\
\rowcolor{flatbg}
DP-Merging + Task Arithmetic
& 54.1 & 55.5 & 64.4 & 78.9 & 84.1 & 62.5 & 84.1 & 51.3 & $66.7_{\textcolor{red}{+4.9}}$ \\

DP + TIES-Merging
& 44.8 & 45.3 & 46.8 & 64.4 & 71.2 & 58.7 & 83.9 & 50.2 & 58.1 \\
\rowcolor{flatbg}
DP-Merging + TIES-Merging
& 50.8 & 52.6 & 51.7 & 69.7 & 75.3 & 61.1 & 87.8 & 52.8 & $62.7_{\textcolor{red}{+4.6}}$ \\

DP + PCB Merging
& 50.7 & 49.6 & 51.5 & 62.2 & 67.7 & 55.9 & 82.6 & 42.1 & 57.8 \\
\rowcolor{flatbg}
DP-Merging + PCB Merging
& 54.5 & 54.2 & 56.2 & 67.9 & 71.2 & 61.3 & 87.2 & 47.4 & $62.5_{\textcolor{red}{+4.7}}$ \\

DP + WUDI-Merging
& 51.7 & 51.1 & 63.4 & 64.3 & 71.8 & 63.1 & 83.0 & 44.2 & 61.6 \\
\rowcolor{flatbg}
DP-Merging + WUDI-Merging
& 54.9 & 54.7 & 66.8 & 68.5 & 76.6 & 68.3 & 85.1 & 50.3 & $65.6_{\textcolor{red}{+4.0}}$ \\
\bottomrule
\end{tabular}
}
\vspace{-2mm}
\end{table*}

\section{Experiments}
\label{sec:experiments}

\subsection{Experimental Settings}
\label{sec:Experimental Settings}
\noindent
\textbf{Datasets and Models.} We evaluate our method on \textit{vision and language tasks.} 
For vision tasks, following~\cite{yang2024adamerging,cheng2025whoever}, we study multi-task model merging across eight image classification datasets, namely SUN397~\citep{xiao2016sun}, Cars~\citep{krause20133d}, RESISC45~\citep{cheng2017remote}, EuroSAT~\citep{helber2019eurosat}, SVHN~\citep{yuval2011reading}, GTSRB~\citep{stallkamp2011german}, MNIST~\citep{lecun1998mnist}, and DTD~\citep{cimpoi2014describing}, and adopt CLIP-based ViT backbones~\citep{radford2021learning}, including ViT-B/32, ViT-B/16, and ViT-L/14. For language tasks, we use the GLUE benchmark~\citep{wang2018glue} and evaluate with RoBERTa-Base and RoBERTa-Large~\citep{liu2019roberta}.

\noindent
\textbf{Baselines.}
We compare with representative merging methods under the same DP setting, including Weight Averaging~\cite{wortsman2022model}, RegMean~\cite{jin2023dataless}, Task Arithmetic~\cite{ilharco2023editing}, TIES-Merging~\cite{yadav2023tiesmerging}, PCB Merging~\cite{du2024parameter}, and WUDI-Merging~\cite{cheng2025whoever}.
All baselines independently train task-specific models with the same DP optimizer and privacy budget, and then merge the released DP models without accessing private data.
The merging step is post-processing and therefore incurs no additional privacy loss.
We apply the same operators to the task models produced by DP-Merging to evaluate whether our merge-aware fine-tuning improves mergeability across merging rules.

\noindent
\textbf{Implementation Details.}
We implement sample-level DP fine-tuning with Opacus~\cite{yousefpour2021opacus} and use AdamW~\cite{loshchilov2017fixing} as the default optimizer.
Unless otherwise specified, each task model is trained with clipping norm $C=0.2$, batch size $16$, learning rate $1e-5$, weight decay $1e-2$, and $10$ training epochs/steps.
Privacy loss is computed using the Opacus privacy accountant and reported as $(\varepsilon,\delta)$-DP, where $\delta=1/N$ and $\varepsilon\in\{1,2,4,8\}$.
For DP-Merging, the perturbation radius $\rho=0.05$ and the alignment coefficient $\lambda=0.05$.
Additional implementation details are provided in Appendix~\ref{app:Implementation Details}.

\subsection{Main Results}
\label{Main Results}
\noindent
\textbf{Evaluation on Visual Tasks.}
We evaluate DP-Merging on the 8-task vision benchmark under $\varepsilon=4$.
Table~\ref{tab:performance_vitbase32} and Table~\ref{tab:performance_vit_l_14} report the main results with ViT-B/32 and ViT-L/14, and additional results with ViT-B/16 are provided in Appendix~\ref{app:More Results on Vision Tasks}.
Across both backbones, standard merging methods exhibit a clear drop from the DP individual reference, suggesting that DP fine-tuning hurts task-vector mergeability.
DP-Merging consistently improves the merged accuracy under all merging operators.
For instance, on ViT-B/32, WUDI-Merging improves from $56.5\%$ to $60.8\%$ when applied to DP-Merging task models. The same trend holds for ViT-L/14 and ViT-B/16.

\noindent
\textbf{Evaluation on Language Tasks.}
Table~\ref{tab:performance_roberta_large} summarizes the multi-task performance of RoBERTa-Large models under $\varepsilon=4$. Compared with standard DP baselines, DP-Merging consistently achieves higher average accuracy across different merging operators. For example, the average score for DP-Merging ranges from 73.9 to 77.0, depending on the merging method used. The average score for naive DP ranges from 71.6 to 74.7. These results indicate that merge-aware DP fine-tuning consistently maintains strong performance across multiple language understanding tasks.

\begin{table*}[h]
\centering
\caption{Multi-task performance of RoBERTa-Large models on GLUE benchmark, $\varepsilon=4$.}
\label{tab:performance_roberta_large} 
\renewcommand\arraystretch{0.95}
\setlength{\tabcolsep}{10pt}
\resizebox{\linewidth}{!}{  
\begin{tabular}{l|cccccccc|cc}
\toprule
\textbf{Method} & \textbf{CoLA} & \textbf{MNLI} & \textbf{MRPC} & \textbf{QNLI} & \textbf{QQP}  & \textbf{RTE}  & \textbf{SST2} & \textbf{STSB} & \textbf{Avg.} \\
\midrule
%\multicolumn{10}{c}{\emph{Non-merging Methods}} \\
DP + Individual
& 60.5 & 79.1 & 81.0 & 89.2 & 82.3 & 77.1 & 93.8 &82.1& 80.6 \\
\midrule
DP + Weight Averaging
& 56.1 & 73.8 & 67.9 & 79.5 & 78.6 & 69.3 & 91.2 & 67.6 & 72.6 \\
\rowcolor{flatbg}
DP-Merging + Weight Averaging
& 58.2 & 76.6 & 69.2 & 79.3 & 81.6 & 69.9 & 91.8 & 68.2 & $73.9_{\textcolor{red}{+1.3}}$ \\

DP + RegMean
& 55.6 & 73.8 & 65.7 & 79.2 & 78.6 & 67.9 & 91.0& 62.5 & 71.6 \\
\rowcolor{flatbg}
DP-Merging + RegMean
& 58.8 & 78.7 & 68.6& 79.5 & 80.9 & 69.3 & 91.9 & 64.6 & $74.7_{\textcolor{red}{+3.1}}$ \\

DP + Task Arithmetic
& 56.1 & 75.5 & 67.2 & 78.9 & 81.5 & 69.9 & 91.2 & 61.7 & 71.8 \\
\rowcolor{flatbg}
DP-Merging + Task Arithmetic
& 59.6 & 77.8 & 68.9 & 81.5 & 82.4 & 70.9 & 91.9 & 63.5 & $74.2_{\textcolor{red}{+2.4}}$ \\

DP + TIES-Merging
& 60.5 & 79.3 & 70.8 & 80.6 & 81.3 & 70.6 & 91.1 & 64.3 & 74.3 \\
\rowcolor{flatbg}
DP-Merging + TIES-Merging
& 61.0 & 81.2 & 73.3 & 82.5 & 82.6 & 71.2 & 92.6 & 65.9 & $76.5_{\textcolor{red}{+2.2}}$ \\

DP + PCB Merging
& 56.2 & 74.2 & 76.6 & 77.5 & 81.8 & 70.6 & 91.1 & 60.2 & 73.8 \\
\rowcolor{flatbg}
DP-Merging + PCB Merging
& 58.9 & 75.7 & 79.4 & 79.8 & 83.1 & 72.4 & 92.6 & 61.8 & $75.9_{\textcolor{red}{+2.1}}$ \\

DP + WUDI-Merging
& 57.2 & 75.8 & 78.9 & 79.6 & 82.6 & 73.5 & 91.3 & 59.5 & 74.7 \\
\rowcolor{flatbg}
DP-Merging + WUDI-Merging
& 59.8 & 77.9 & 82.1 & 83.3 & 82.7 & 75.1 & 92.8 & 61.3 & $77.0_{\textcolor{red}{+2.3}}$ \\
\bottomrule
\end{tabular}
}
\vspace{-2mm}
\end{table*}

\begin{table*}[h]
\small
\centering
\caption{Performance under different privacy budgets on visual and language tasks using TIES-Merging. 
Smaller $\varepsilon$ indicates stronger privacy protection.}
\label{tab:privacy_budget}
\setlength{\tabcolsep}{9pt}
\renewcommand{\arraystretch}{0.9}
\small
\resizebox{0.95\textwidth}{!}{
\begin{tabular}{c|ccc|ccc}
\toprule
\multirow{2}{*}{\textbf{Privacy Budget $\varepsilon$}} 
& \multicolumn{3}{c|}{\textbf{Visual Tasks (ViT-B/32)}} 
& \multicolumn{3}{c}{\textbf{Language Tasks (RoBERTa-Large)}} \\
\cmidrule(lr){2-4} \cmidrule(lr){5-7}
& \textbf{DP(AdamW)} & \textbf{DP-Merging} & \textbf{Gain} 
& \textbf{DP(AdamW)} & \textbf{DP-Merging} & \textbf{Gain} \\
\midrule
1.0  & 46.1 & \textbf{52.2} & +6.1 & 64.6 & \textbf{68.4} & +3.8 \\
2.0  & 52.8 & \textbf{58.6} & +5.8 & 68.3 & \textbf{70.6} & +2.3 \\
4.0  & 57.5 & \textbf{62.1} & +4.6 & 74.3 & \textbf{76.5} & +2.2 \\
8.0  & 62.0 & \textbf{66.7} & +4.7 & 76.3 & \textbf{77.8} & +1.5 \\
\bottomrule
\end{tabular}
}
\vspace{-2mm}
\end{table*}
\vspace{-2mm}
\subsection{Analysis under Different Privacy Budgets}
\label{subsec:privacy_budget}
\vspace{-2mm}
To evaluate DP-Merging under varying privacy budgets $\varepsilon$, we keep the merging protocol unchanged.
A smaller $\varepsilon$ corresponds to stronger privacy protection and usually introduces larger optimization perturbations through clipping and noise. 
As shown in Table~\ref{tab:privacy_budget}, DP-Merging consistently outperforms naive DP merging across different privacy budgets. 
The improvement is more pronounced under stronger privacy constraints, suggesting that the proposed flatness and alignment components effectively mitigate the geometry distortion caused by differential privacy.

\vspace{-2mm}
\subsection{Ablation Study}
\vspace{-2mm}
We conduct ablation studies to understand the contribution of each component in DP-Merging and analyze its sensitivity to key hyperparameters.
All experiments use $\varepsilon=4$ and default settings.

\textbf{Effect of each component.}
DP-Merging contains two key components: sharpness-aware fine-tuning and reference-anchored alignment. To evaluate their individual contributions, we compare the full method with two variants:
(1) removing the alignment component, and
(2) removing the sharpness-aware fine-tuning.
As shown in Table~\ref{tab:ablation_components}, both components improve performance over naive DP. The sharpness-aware fine-tuning mainly improves local robustness by encouraging flatter task-specific solutions, while the alignment component enhances cross-task compatibility by reducing discrepancies among task vectors. Combining both components yields the best performance, demonstrating that sharpness and alignment are complementary.

\begin{table}[h]
\vspace{-2mm}
\small
\centering
\caption{Ablation study of different merging algorithms on the 8-task vision benchmark with ViT-B/32. ``w/o Alignment'' represents without the reference-anchored alignment component, and ``w/o Flatness'' represents without sharpness-aware fine-tuning.}
\label{tab:ablation_components}
\setlength{\tabcolsep}{3pt}
\renewcommand{\arraystretch}{0.95}
\small
\resizebox{0.9\linewidth}{!}{
\begin{tabular}{lcccccc}
\toprule
\textbf{Variant} 
& \textbf{Weight Averaging} 
& \textbf{Task Arithmetic} 
& \textbf{TIES-Merging}  
& \textbf{PCB-Merging} \\
\midrule
DP-AdamW 
& 46.0 
& 54.2
& 57.5
& 54.5 \\
DP-Merging 
& 49.7 
& 58.1
& 62.1
& 59.2 \\
DP-Merging w/o Alignment 
& 48.1 
& 55.6 
& 60.7  
& 57.4 \\
DP-Merging w/o Flatness 
& 47.5 
& 56.2 
& 59.8  
& 57.9 \\
\bottomrule
\end{tabular}
}
\end{table}

\textbf{Sensitivity to the flatness radius $\rho$.}
We study the influence of the flatness radius $\rho$ on merged accuracy. When varying $\rho$, the alignment strength $\lambda$ is fixed to its default value. As shown in Table~\ref{tab:ablation_rho}, moderate values of $\rho$ consistently improve merged accuracy, while overly large values may hurt task-specific adaptation or over-constrain different tasks.

\textbf{Sensitivity to the alignment strength $\lambda$.}
We study the effect of alignment strength $\lambda$ on merged accuracy (flatness radius $\rho$ fixed). As shown in Table~\ref{tab:ablation_lambda}, moderate values of $\lambda$ consistently improve merged accuracy, while excessively large values may limit flexibility across tasks.

\begin{table*}[h]
\small
\centering
\begin{minipage}{0.48\linewidth}
\small
\centering
\captionof{table}{Sensitivity to the flatness radius $\rho$ on ViT-B/32 using TIES-Merging.}
\label{tab:ablation_rho}
\renewcommand\arraystretch{0.75}
\setlength{\tabcolsep}{6pt}
\begin{tabular}{lccccc}
\toprule
$\rho$ & 0 & 0.01 & 0.03 & 0.05 & 0.10 \\
\midrule
Avg Acc & 59.8 & 60.4 & 61.6 & \textbf{62.1} & 61.8 \\
\bottomrule
\end{tabular}
\end{minipage}
\hfill
\begin{minipage}{0.48\linewidth}
\small
\centering
\captionof{table}{Sensitivity to the alignment strength $\lambda$ on ViT-B/32 using TIES-Merging.}
\label{tab:ablation_lambda}
\renewcommand\arraystretch{0.75}
\setlength{\tabcolsep}{6pt}
\begin{tabular}{lccccc}
\toprule
$\lambda$ & 0 & 0.01 & 0.05 & 0.10 & 0.50 \\
\midrule
Avg Acc & 60.8 & 61.2 & \textbf{62.3} & 62.0 & 61.6 \\
\bottomrule
\end{tabular}
\end{minipage}

\end{table*}

\textbf{Robustness to merging operators.}
Since our method improves the private fine-tuning stage rather than designing a new merging rule, we evaluate whether it works with different data-free merging operators.
As shown in Table~\ref{tab:ablation_merger}, DP-Merging consistently improves the final merged model across different representative merging operators, demonstrating that our method enhances the intrinsic mergeability of task-specific models.

\begin{table}[h]
\vspace{-2mm}
\small
\centering
\caption{Robustness to different merging operators on the 8-task vision benchmark with ViT-B/32.}
\label{tab:ablation_merger}
\renewcommand\arraystretch{0.9}
\setlength{\tabcolsep}{1.5pt}
\begin{tabular}{lccccc}
\toprule
\textbf{Training} & \textbf{Weight Averaging} & \textbf{Task Arithmetic} & \textbf{TIES-Merging} &\textbf{PCB-Merging} & \textbf{WUDI-Merging} \\
\midrule
DP(AdamW) 
& 46.0 & 54.2 & 57.5 & 54.5 & 56.5 \\
\rowcolor{flatbg}
\textbf{DP-Merging (Ours)}
 & 49.7 & 58.1 & 62.1 & 59.2 & 60.8\\
\bottomrule
\end{tabular}
\vspace{-2mm}
\end{table}

\section{Conclusion}
\label{sec:conclusion}
We studied differentially private model merging, where independently fine-tuned private models are combined without sharing task data. The challenge arises not only from the utility loss of DP fine-tuned models, but also from their geometric incompatibility, caused by local curvature and displacement from the pretrained anchor. To address this, we propose DP-Merging, which combines a flatness objective with a pretrained-anchor regularizer to improve robustness and control task-specific drift. Experiments on vision and language benchmarks show that DP-Merging consistently improves merged-model performance under different privacy budgets and merging settings. We hope this work contributes to a better understanding of, and advances in, model merging under privacy constraints.

\newpage
\bibliographystyle{plain}
\bibliography{main}

\onecolumn
\setcounter{page}{1}
\thispagestyle{empty}

\begin{center}
    {\LARGE\bfseries When Privacy Hurts Mergeability: Geometry-Aware Model Merging under Differential Privacy\par}
    \vspace{3mm}
    {\Large Supplementary Material\par}
\end{center}

\vspace{8mm}

\begin{center}
{\Large\bfseries LIST OF APPENDICES\par}
\end{center}

\vspace{3mm}

\begin{center}
\begin{minipage}{0.72\textwidth}
{\large
\noindent \hyperref[app:Notation]{\textbf{A: Notation}}

\noindent \hyperref[app:exp_all]{\textbf{B: Implementation of Experiments}}
\begin{enumerate}[label=B.\arabic*, leftmargin=3.2em, itemsep=1pt, topsep=2pt]
  \item \hyperref[app:Datasets]{Datasets}
  \item \hyperref[app:Models]{Models}
  \item \hyperref[app:Implementation Details]{Implementation Details}
  \item \hyperref[app:More Results on Vision Tasks]{More Results on Vision Tasks}
  \item \hyperref[app:More Results on Language Tasks]{More Results on Language Tasks}
\end{enumerate}

\vspace{1mm}
\noindent \hyperref[app:theoretical]{\textbf{C: Implementation of Theoretical Analysis}}
\begin{enumerate}[label=C.\arabic*, leftmargin=3.2em, itemsep=1pt, topsep=2pt]
  \item \hyperref[app:privacy]{Privacy guarantee of DP-Merging}
  \item \hyperref[app:mergeability]{Mergeability Analysis}
  \item \hyperref[app:anchor]{Reference Anchoring Controls Merging Displacement}
  \item \hyperref[app:jtl]{Connection to Joint-Task Loss Linearity}
\end{enumerate}

\vspace{1mm}
\noindent \hyperref[app:Limitations and Future Work]{\textbf{D: Limitations and Future Work}}
}
\end{minipage}
\end{center}

\newpage
\appendix

\section{Notation}
\label{app:Notation}
Table~\ref{tab:notation} summarizes the main notation used throughout the paper.

\begin{table}[h]
\centering
\caption{Summary of notation.}
\label{tab:notation}
\small
\setlength{\tabcolsep}{5pt}
\renewcommand{\arraystretch}{1.08}
\begin{tabular}{l|p{0.72\linewidth}}
\toprule
\textbf{Notation} & \textbf{Description} \\
\midrule
$w_0$ & Public pretrained initialization shared by all downstream tasks. \\
$T$ & Number of downstream tasks. \\
$t \in [T]$ & Task index. \\
$\mathcal{D}_t$ & Private training dataset of task $t$. \\
$\ell(w;x,y)$ & Sample-wise loss evaluated at model parameter $w$ on example $(x,y)$. \\
$\mathcal{L}_t(w)$ & Empirical loss of task $t$ on $\mathcal{D}_t$. \\
$\mathcal{A}_t$ & Randomized DP fine-tuning algorithm for task $t$. \\
$w_t$ & Task-specific model obtained by privately fine-tuning $w_0$ on $\mathcal{D}_t$. \\
$\Delta_t = w_t - w_0$ & Task vector of task $t$ relative to the public initialization. \\
$w_{\mathrm{merge}}$ & Merged model obtained from the released task-specific models. \\
$\mathcal{M}$ & Model-merging operator, such as Weight Averaging, RegMean, Task Arithmetic, TIES-Merging, PCB, or WUDI. \\
$H_t$ & Hessian or local curvature matrix of $\mathcal{L}_t$ around $w_t$. \\
$\lambda_{\max}(H_t)$ & Largest eigenvalue of $H_t$, used as a local sharpness measure. \\
$\|w_t-w_0\|_2$ & Reference drift of task model $w_t$ from the shared initialization $w_0$. \\
$C$ & Per-sample gradient clipping threshold in DP fine-tuning. \\
$\sigma$ & Noise multiplier of the Gaussian mechanism. \\
$q$ & Sampling rate used in DP fine-tuning. \\
$(\varepsilon,\delta)$ & Differential privacy parameters. \\
$\rho$ & SAM radius used in the DP-aware flat-minima search component. \\
$\lambda$ & Coefficient of the reference-based alignment regularizer. \\
\bottomrule
\end{tabular}
\end{table}

\section{Implementation of Experiments}
\label{app:exp_all}

\subsection{Datasets}
\label{app:Datasets}

We evaluate DP-Merging on both vision and language benchmarks.
For all tasks, the training data are treated as private and are only used during DP fine-tuning.
The merging stage only accesses the released DP task models and does not use the original training data.

\paragraph{Vision tasks.}
For vision experiments, we follow the standard multi-task model-merging setting used in prior work~\citep{yang2024adamerging,cheng2025whoever}.
We consider eight image classification tasks: SUN397~\citep{xiao2016sun} for scene recognition, Cars~\citep{krause20133d} for fine-grained car classification, RESISC45~\citep{cheng2017remote} and EuroSAT~\citep{helber2019eurosat} for remote-sensing and land-cover recognition, SVHN~\citep{yuval2011reading} and MNIST~\citep{lecun1998mnist} for digit recognition, GTSRB~\citep{stallkamp2011german} for traffic-sign recognition, and DTD~\citep{cimpoi2014describing} for texture classification.
These tasks cover diverse visual domains and therefore provide a broad testbed for evaluating whether DP task models remain mergeable across heterogeneous classification problems.
For each dataset, we independently fine-tune a private task model from the same pretrained vision backbone under sample-level differential privacy.
The released DP task models are then merged into a single model without accessing the original training data.
We report classification accuracy on each task and use the average accuracy across all eight tasks as the main vision metric.

\paragraph{Language tasks.}
For language experiments, we use eight tasks from the GLUE benchmark~\citep{wang2018glue}: CoLA for linguistic acceptability, SST-2 for sentiment classification, MRPC and QQP for paraphrase detection, STS-B for semantic textual similarity, MNLI and RTE for natural language inference, and QNLI for question-answering natural language inference.
For each task, we fine-tune a separate private language model under sample-level differential privacy and merge the released DP task models without accessing the original GLUE training data.
Following the standard GLUE protocol, we report Matthew's correlation for CoLA, Spearman correlation for STS-B, F1/accuracy for MRPC and QQP, and accuracy for SST-2, MNLI, QNLI, and RTE.
All task scores are converted to a $0$--$100$ scale, and the average score over the eight tasks is used as the main language metric.

\subsection{Models}
\label{app:Models}

\paragraph{Vision models.}
For vision experiments, we use CLIP-pretrained Vision Transformer backbones~\citep{radford2021learning}, including ViT-B/32, ViT-B/16, and ViT-L/14.
ViT-B/32 and ViT-B/16 share the same base-size Transformer architecture with 12 layers and hidden dimension 768, but use different patch sizes of $32 \times 32$ and $16 \times 16$, respectively.
ViT-L/14 is a larger backbone with 24 layers, hidden dimension 1024, and $14 \times 14$ image patches.
For each backbone, all task-specific models are initialized from the same public CLIP checkpoint and independently fine-tuned on each private vision dataset under sample-level DP.
During merging, we merge the shared visual encoder parameters of the released DP task models.
Since the eight vision datasets have different label spaces, task-specific classification heads are kept separate for evaluation.
Thus, the merged vision model consists of one shared visual encoder and the corresponding task head for each dataset.

\paragraph{Language models.}
For language experiments, we use RoBERTa-Base and RoBERTa-Large~\citep{liu2019roberta}.
RoBERTa-Base has 12 Transformer layers with hidden dimension 768, while RoBERTa-Large has 24 Transformer layers with hidden dimension 1024. For each GLUE task, all task-specific models are initialized from the same public RoBERTa checkpoint and independently fine-tuned under sample-level DP.
During merging, we merge the shared RoBERTa encoder parameters of the released DP task models. Because GLUE tasks have different label spaces and output formats, task-specific prediction heads are kept separate for evaluation.
The merged language model, therefore, uses one shared encoder together with the corresponding task head for each evaluation task.

\subsection{Implementation Details}
\label{app:Implementation Details}
This section provides additional implementation details that are omitted from the main text due to space limits. All experiments are conducted on NVIDIA RTX 5090 GPUs.

\paragraph{Training schedules.}
All task-specific models are fine-tuned using sample-level differential privacy with gradient clipping $C=0.2$, batch size $16$, learning rate $1\times10^{-5}$, weight decay $1\times10^{-2}$, and $10$ training epochs/steps. We use AdamW as the optimizer with parameters $(\beta_1,\beta_2)=(0.9,0.999)$.  
For clarity, different task types follow the same DP fine-tuning defaults; any deviations from this default schedule are explicitly noted in the corresponding experiment description in the main text.

\paragraph{Merging hyperparameters.}
For all experiments, we use fixed merging hyperparameters for each merging method to ensure fair comparison:  
For Task Arithmetic, the scaling coefficient is $0.3$.  
For TIES-Merging, pruning density is $0.2$.  
For PCB Merging and WUDI-Merging, we use default configurations from the original papers unless otherwise specified.  

The same merging hyperparameters are used for both the standard DP fine-tuning baselines and DP-Merging. For DP-Merging specifically, the perturbation radius $\rho=0.05$ and the alignment coefficient $\lambda=0.05$ are used consistently across all tasks.

\begin{table*}[ht]
\centering
\caption{Multi-task accuracy (\%) on the 8-task vision benchmark with ViT-B/16, $\varepsilon=4$.}
\label{tab:performance_vitbase16}
\renewcommand\arraystretch{0.95}
\setlength{\tabcolsep}{5pt}
\resizebox{\linewidth}{!}{
\begin{tabular}{l|cccccccc|c}
\toprule
\textbf{Method}
& \textbf{SUN397}
& \textbf{Cars}
& \textbf{RESISC45}
& \textbf{EuroSAT}
& \textbf{SVHN}
& \textbf{GTSRB}
& \textbf{MNIST}
& \textbf{DTD}
& \textbf{Avg Acc} \\
\midrule
DP + Individual
& 59.8 & 68.2 & 88.1 & 86.9 & 86.2 & 89.4 & 90.7 & 70.6 & 80.0 \\
\midrule
DP + Weight Averaging
& 45.3 & 43.4 & 51.4 & 51.7 & 44.2 & 32.8 & 67.5 & 30.1 & 46.8 \\
\rowcolor{flatbg}
DP-Merging + Weight Averaging
& 49.4 & 48.4 & 57.1 & 56.2 & 49.4 & 39.4 & 75.1 & 38.6 & $52.1_{\textcolor{red}{+5.3}}$ \\

DP + RegMean
& 45.3 & 43.8 & 55.9 & 58.6 & 58.2 & 47.4 & 73.7 & 36.0 & 52.1 \\
\rowcolor{flatbg}
DP-Merging + RegMean
& 48.3 & 47.8 & 59.9 & 64.6 & 63.6 & 49.4 & 76.7 & 44.0 & $56.3_{\textcolor{red}{+4.2}}$ \\

DP + Task Arithmetic
& 45.2 & 44.9 & 56.7 & 68.4 & 70.5 & 49.7 & 77.3 & 40.4 & 57.9 \\
\rowcolor{flatbg}
DP-Merging + Task Arithmetic
& 49.1 & 48.5 & 59.4 & 72.6 & 78.1 & 56.5 & 79.1 & 46.3 & $61.4_{\textcolor{red}{+3.5}}$ \\

DP + TIES-Merging
& 39.8 & 38.7 & 40.7 & 59.5 & 66.2 & 52.1 & 78.3 & 44.2 & 52.3 \\
\rowcolor{flatbg}
DP-Merging + TIES-Merging
& 45.8 & 46.6 & 45.7 & 63.8 & 69.2 & 55.1 & 81.3 & 46.2 & $57.1_{\textcolor{red}{+4.8}}$ \\

DP + PCB Merging
& 45.7 & 43.9 & 46.5 & 57.2 & 61.7 & 50.3 & 77.0 & 37.1 & 52.6 \\
\rowcolor{flatbg}
DP-Merging + PCB Merging
& 49.4 & 49.0 & 51.2 & 61.3 & 65.1 & 55.8 & 81.1 & 38.6 & $56.4_{\textcolor{red}{+3.8}}$ \\

DP + WUDI-Merging
& 46.7 & 45.5 & 58.6 & 59.3 & 66.4 & 57.1 & 78.2 & 39.1 & 56.3 \\
\rowcolor{flatbg}
DP-Merging + WUDI-Merging
& 49.6 & 48.1 & 61.3 & 63.8 & 71.5 & 61.3 & 81.6 & 45.7 & $60.2_{\textcolor{red}{+3.9}}$ \\
\bottomrule
\end{tabular}
}
\end{table*}

\subsection{More Results on Vision Tasks}
\label{app:More Results on Vision Tasks}

Table~\ref{tab:performance_vitbase16} reports the multi-task accuracy (\%) of various training strategies on eight vision benchmarks. Each method is evaluated, including individual training (DP + Individual), simple weight averaging (DP + Weight Averaging), and various DP-Merging variants combined with regularization or task-specific aggregation strategies. Individual training achieves the highest average accuracy (80.0\%), but does not leverage knowledge sharing across tasks, whereas simple weight averaging results in a substantially lower average accuracy (46.8\%). Incorporating DP-Merging with different weighting or aggregation strategies significantly improves multi-task performance, with DP-Merging + Task Arithmetic reaching 61.4\%, and DP-Merging + WUDI-Merging achieving the highest average accuracy among merged approaches (60.2\%), demonstrating that careful parameter merging and task coordination can effectively enhance generalization in multi-task settings. The numbers in red indicate improvements relative to the corresponding non-merged baseline, highlighting the positive impact of merging strategies.

\noindent
\textbf{Loss Landscape Visualization.} 
To illustrate the effect of different DP fine-tuning and merging strategies on the geometry of merged models, we visualize the loss landscapes of the final merged ViT models under privacy budget $\varepsilon=4$. 
Figure~\ref{fig:naive_dp_loss} shows the landscapes of models merged using Task Arithmetic after Naive DP fine-tuning, while Figure~\ref{fig:dp_merging_loss} shows the landscapes of models merged using Task Arithmetic after DP-Merging. 
All visualized models are final merged models for three ViT variants (ViT-B/32, ViT-B/16, ViT-L/14). 
Comparing the two sets of landscapes, we observe that DP-Merging produces wider and smoother low-loss regions, indicating improved stability and geometric compatibility of the merged models.

\begin{figure*}[h]
    \centering
    \begin{subfigure}[b]{0.3\textwidth}
        \centering
        \includegraphics[width=\textwidth]{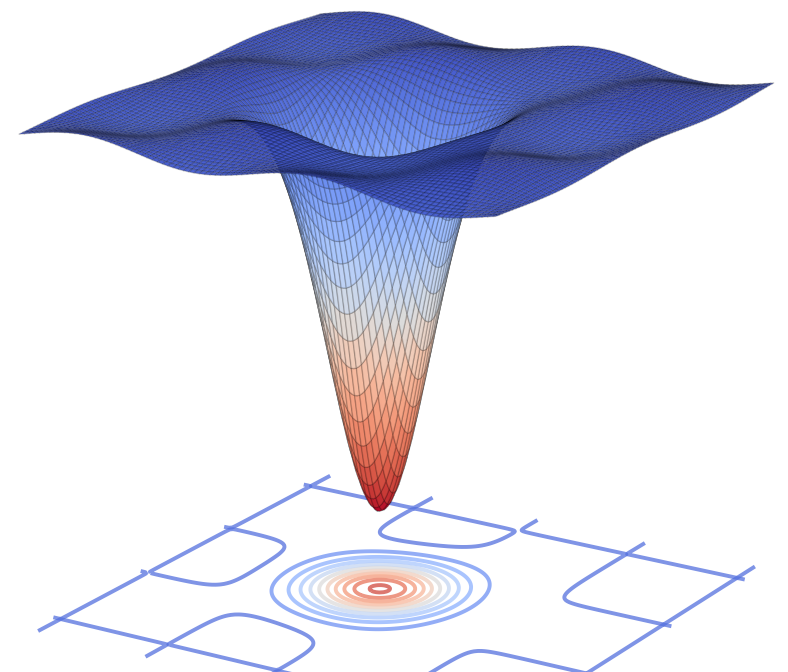}
        \caption{ViT-B/32}
        \label{fig:bar_1}
    \end{subfigure}
    \begin{subfigure}[b]{0.3\textwidth}
        \centering
        \includegraphics[width=\textwidth]{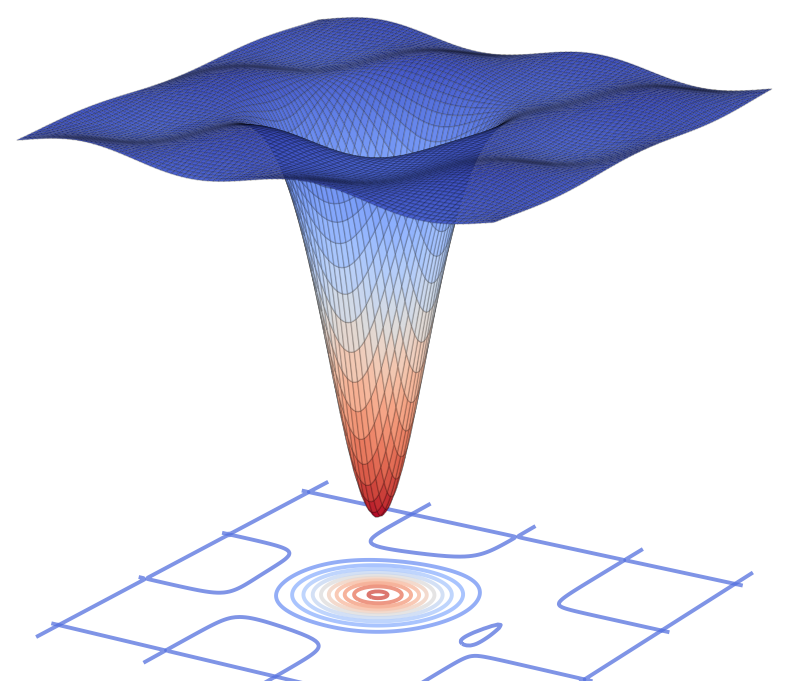}
        \caption{ViT-B/16}
        \label{fig:bar_2}
    \end{subfigure}
    \begin{subfigure}[b]{0.3\textwidth}
        \centering
        \includegraphics[width=\textwidth]{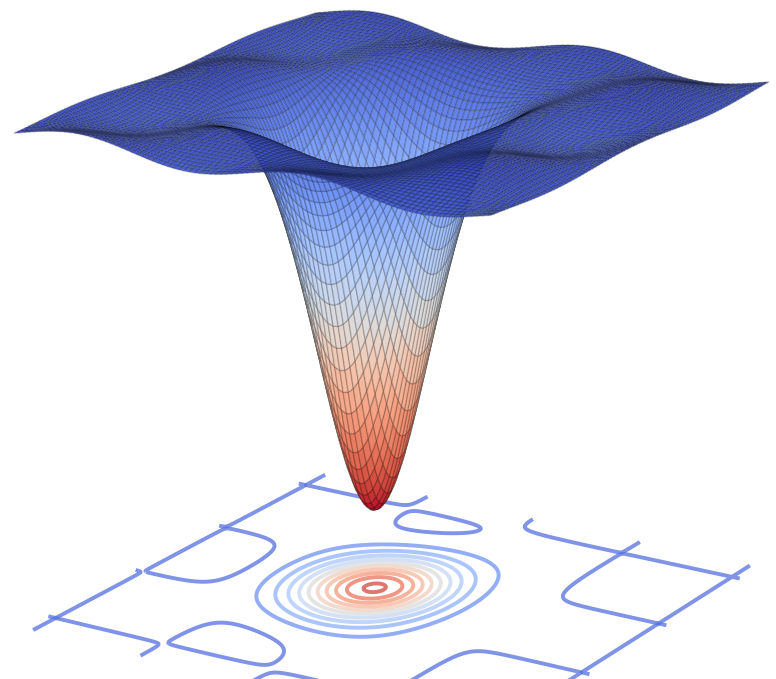}
        \caption{ViT-L/14}
        \label{fig:bar_3}
    \end{subfigure}
\caption{
Loss landscapes of ViT models that were first Naive DP fine-tuned under $\varepsilon=4$ and then merged using Task Arithmetic (ViT-B/32, ViT-B/16, ViT-L/14).
}
    \label{fig:naive_dp_loss}
    \vspace{-3mm}
\end{figure*}

\begin{figure*}[h]
    \centering
    \begin{subfigure}[b]{0.3\textwidth}
        \centering
        \includegraphics[width=\textwidth]{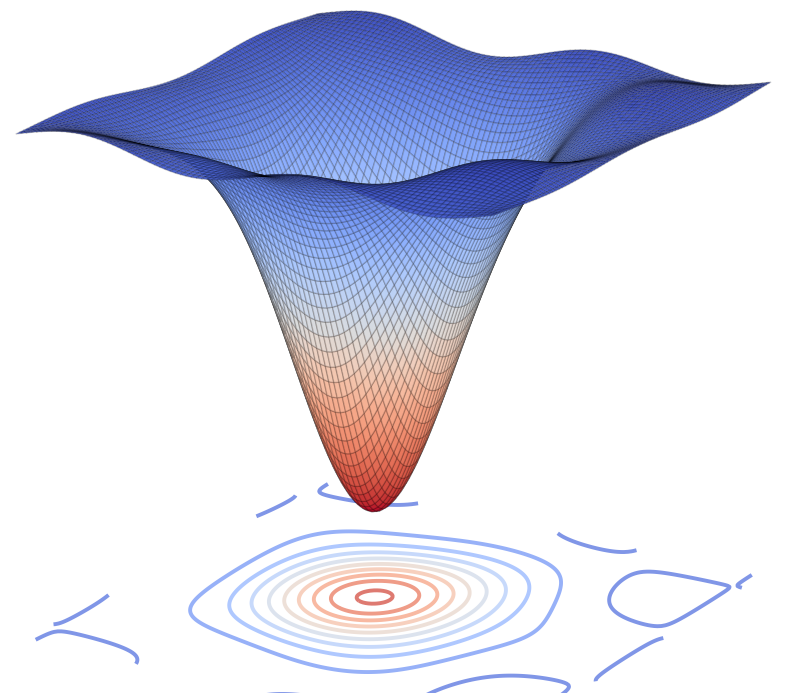}
        \caption{ViT-B/32}
        \label{fig:bar_1}
    \end{subfigure}
    \begin{subfigure}[b]{0.3\textwidth}
        \centering
        \includegraphics[width=\textwidth]{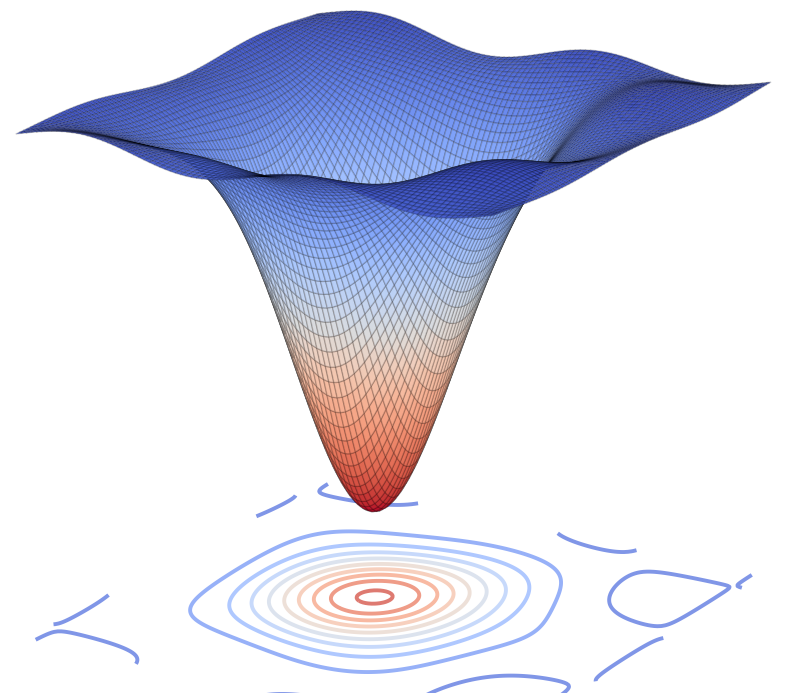}
        \caption{ViT-B/16}
        \label{fig:bar_2}
    \end{subfigure}
    \begin{subfigure}[b]{0.3\textwidth}
        \centering
        \includegraphics[width=\textwidth]{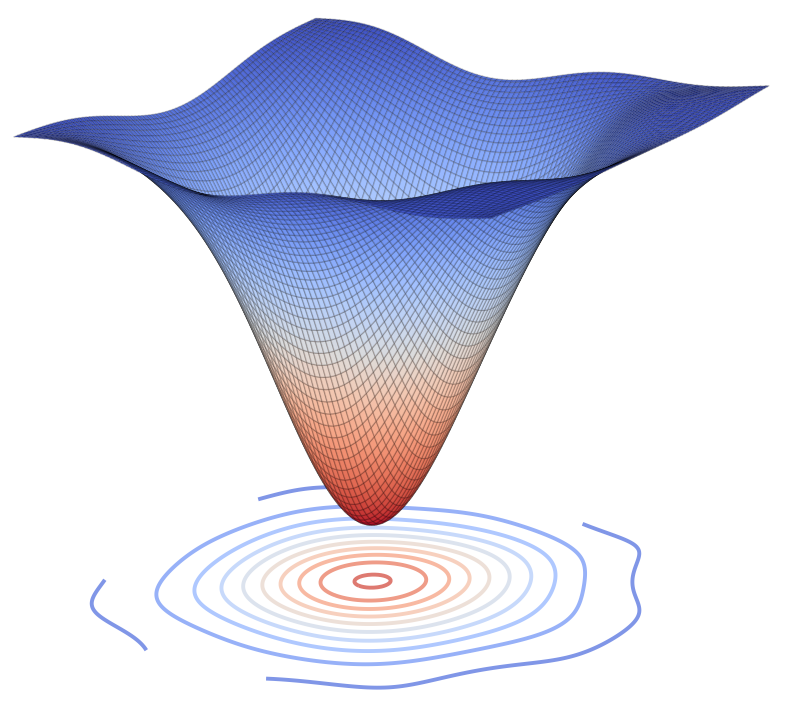}
        \caption{ViT-L/14}
        \label{fig:bar_3}
    \end{subfigure}
\caption{
Loss landscapes of the merged ViT models (ViT-B/32, ViT-B/16, and ViT-L/14) under $\varepsilon=4$, obtained using DP-Merging with Task Arithmetic. 
}
    \label{fig:dp_merging_loss}
    \vspace{-3mm}
\end{figure*}

\subsection{More Results on Language Tasks}
\label{app:More Results on Language Tasks}

Table~\ref{tab:performance_roberta_base} reports the multi-task performance of RoBERTa-Base models on the GLUE benchmark (CoLA, MNLI, MRPC, QNLI, QQP, RTE, SST2, STSB). Individual training (DP + Individual) achieves the highest average score (79.8), while simple weight averaging drops it to 72.3. DP-Merging strategies consistently improve results, with DP-Merging + Task Arithmetic reaching 73.9 and DP-Merging + WUDI-Merging achieving 76.5, demonstrating that parameter merging and task coordination enhance multi-task generalization. 

\begin{table*}[h]
\centering
\caption{Multi-task performance of RoBERTa-Base models on GLUE benchmark, $\varepsilon=4$.}
\label{tab:performance_roberta_base} 
\renewcommand\arraystretch{0.95}
\setlength{\tabcolsep}{8pt}
\resizebox{\linewidth}{!}{  
\begin{tabular}{l|cccccccc|c}
\toprule
\textbf{Method} & \textbf{CoLA} & \textbf{MNLI} & \textbf{MRPC} & \textbf{QNLI} & \textbf{QQP}  & \textbf{RTE}  & \textbf{SST2} & \textbf{STSB} & \textbf{Avg.} \\
\midrule
%\multicolumn{10}{c}{\emph{Non-merging Methods}} \\
DP + Individual
& 59.2 & 77.7 & 80.1 & 88.7 & 81.5 & 76.7 & 92.7 & 79.4 & 79.8 \\
\midrule
DP + Weight Averaging
& 55.3 & 73.4 & 67.4 & 78.7 & 78.2 & 68.8 & 90.5 & 66.1 & 72.3 \\
\rowcolor{flatbg}
DP-Merging + Weight Averaging
& 58.2 & 76.1 & 69.0 & 79.1 & 81.3 & 69.7 & 91.6 & 68.2 & $73.0_{\textcolor{red}{+0.7}}$ \\

DP + RegMean
& 55.3 & 73.5 & 65.6 & 78.6 & 78.1 & 67.4 & 90.7 & 62.0 & 70.4 \\
\rowcolor{flatbg}
DP-Merging + RegMean
& 58.6 & 78.2 & 68.4 & 79.0 & 80.6 & 69.2 & 91.6 & 63.8 & $73.7_{\textcolor{red}{+3.3}}$ \\

DP + Task Arithmetic
& 55.2 & 74.9 & 66.7 & 78.9 & 80.2 & 69.7 & 90.3 & 61.4 & 70.8 \\
\rowcolor{flatbg}
DP-Merging + Task Arithmetic
& 59.6 & 77.8 & 68.9 & 80.9 & 82.0 & 70.7 & 91.6 & 63.2 & $73.9_{\textcolor{red}{+3.1}}$ \\

DP + TIES-Merging
& 59.8 & 78.6 & 70.7 & 79.7 & 81.2 & 70.1 & 91.3 & 64.2 & 73.7 \\
\rowcolor{flatbg}
DP-Merging + TIES-Merging
& 61.0 & 80.6 & 72.4 & 80.9 & 82.1 & 71.2 & 92.6 & 65.8 & $76.3_{\textcolor{red}{+2.6}}$ \\

DP + PCB Merging
& 55.7 & 73.6 & 76.5 & 77.2 & 81.7 & 70.3 & 91.0 & 60.1 & 72.8 \\
\rowcolor{flatbg}
DP-Merging + PCB Merging
& 58.6 & 74.2 & 78.3 & 79.5 & 82.9 & 72.4 & 92.3 & 61.6 & $74.9_{\textcolor{red}{+2.1}}$ \\

DP + WUDI-Merging
& 56.7 & 75.5 & 78.5 & 79.3 & 82.4 & 73.1 & 91.2 & 59.0 & 73.6 \\
\rowcolor{flatbg}
DP-Merging + WUDI-Merging
& 59.6 & 76.9 & 81.0 & 82.1 & 83.1 & 74.6 & 92.8 & 60.5 & $76.5_{\textcolor{red}{+2.9}}$ \\
\bottomrule
\end{tabular}
}
\end{table*}

\section{Implementation of Theoretical Analysis}
\label{app:theoretical}

\subsection{Privacy guarantee of DP-Merging}
\label{app:privacy}

Each DP-Merging iteration uses two privatized clipped-gradient evaluations: one to construct the sharpness-aware perturbation and one to update the model at the perturbed point.
The perturbation itself is a deterministic function of the first private gradient
and therefore is post-processing. The reference-anchoring term depends only on
the current model and the public initialization, so it incurs no additional
privacy loss. Finally, the merging step only processes already released private
task models and is also post-processing.

\begin{theorem}[Privacy guarantee of DP-Merging]
\label{thm:privacy_budget}
Assume that each task dataset $\mathcal{D}_t$ is disjoint, and each private
example belongs to at most one task. For task $t$, suppose Algorithm~\ref{alg:dp_sam_merging}
runs for $K$ iterations with Poisson sampling rate $q$, clipping threshold $C$,
and Gaussian noise multiplier $\sigma$. Let
$\varepsilon_{\mathrm{pair}}(\alpha;q,\sigma)$ denote the order-$\alpha$ RDP cost
of one Poisson-subsampled paired Gaussian mechanism that releases the two noisy
clipped-gradient quantities used in one DP-Merging iteration. Then, for any
R\'enyi order $\alpha>1$, the released task model $w_t^{\mathrm{DP}}$ satisfies
\[
(\alpha,\, K\varepsilon_{\mathrm{pair}}(\alpha;q,\sigma))\text{-RDP}.
\]
Consequently, for any $\delta_t\in(0,1)$, $w_t^{\mathrm{DP}}$ satisfies
$(\varepsilon_t,\delta_t)$-DP with
\[
\varepsilon_t
=
\min_{\alpha>1}
\left\{
K\varepsilon_{\mathrm{pair}}(\alpha;q,\sigma)
+
\frac{\log(1/\delta_t)}{\alpha-1}
\right\}.
\]
Since the task datasets are disjoint, releasing all private task models
$\{w_t^{\mathrm{DP}}\}_{t=1}^{T}$ satisfies
\[
\left(
\max_{t\in[T]}\varepsilon_t,\,
\max_{t\in[T]}\delta_t
\right)\text{-DP}.
\]
The merged model
\[
w_{\mathrm{merge}}
=
\frac{1}{T}\sum_{t=1}^{T} w_t^{\mathrm{DP}}
\]
incurs no additional privacy loss by post-processing.
\end{theorem}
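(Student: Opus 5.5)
We follow the standard RDP accounting pipeline for DP-SGD and track which quantities are genuinely private and which are post-processing.

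\textbf{Per-iteration mechanism.} Fix task $t$ and iteration $k$. Conditioned on the current iterate $w_{t,k}$ (already released to the adversary in the adaptive composition view), iteration $k$ performs the following steps. It draws a Poisson subsample $B_{t,k}$ with rate $q$. It then releases the pair $(\tilde g_{t,k},\tilde h_{t,k})$, where the second query point $w_{t,k}+\epsilon_{t,k}$ depends on the first noisy output only through $\epsilon_{t,k}=\rho_t\tilde g_{t,k}/\|\tilde g_{t,k}\|_2$.

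Each of the two sums of clipped per-example gradients has $\ell_2$-sensitivity $C$ under add/remove of one example. We therefore view the pair as a single adaptive two-step Gaussian mechanism on a shared subsample, and define $\varepsilon_{\mathrm{pair}}(\alpha;q,\sigma)$ to be its RDP curve, exactly as in the statement. One concrete upper bound on $\varepsilon_{\mathrm{pair}}$ comes from treating the pair as a Gaussian mechanism on the concatenated vector, which has sensitivity $\sqrt2 C$ and noise $\sigma C$ per coordinate block, and then applying the subsampled-Gaussian RDP bound. Since the theorem takes $\varepsilon_{\mathrm{pair}}$ as given, we only need to argue that this object is well defined as the RDP cost of one iteration's release.

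The subsequent update $w_{t,k+1}=w_{t,k}-\eta(\tilde h_{t,k}+2\lambda(w_{t,k}-w_0))$ is a deterministic function of $w_{t,k}$, the released pair, and the public $w_0$. By the post-processing property of RDP it adds no cost.

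\textbf{Composition and conversion.} Applying the adaptive composition theorem of RDP \cite{mironov2017renyi} over the $K$ iterations shows that the full transcript $(w_{t,1},\dots,w_{t,K})$ is $(\alpha,K\varepsilon_{\mathrm{pair}}(\alpha;q,\sigma))$-RDP for every $\alpha>1$. Post-processing then transfers this guarantee to $w_t^{\mathrm{DP}}=w_{t,K}$. The standard RDP-to-DP conversion gives $(K\varepsilon_{\mathrm{pair}}+\log(1/\delta_t)/(\alpha-1),\delta_t)$-DP for each $\alpha$. Minimizing over $\alpha$ is legitimate because the conversion holds for every $\alpha$ simultaneously, so we may pick the best one for the fixed $\delta_t$; this yields the stated $\varepsilon_t$.

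\textbf{Parallel composition and merging.} Neighboring global datasets differ in one example, which by disjointness lies in exactly one $\mathcal{D}_{t^*}$. The mechanisms for $t\neq t^*$ have identical output distributions on the two neighbors, and the randomness of different tasks is independent. Hence, for any measurable set of the joint output, we integrate over the outputs of the other tasks and apply the $(\varepsilon_{t^*},\delta_{t^*})$ guarantee to the $t^*$ coordinate. This gives $(\varepsilon_{t^*},\delta_{t^*})$, and therefore $(\max_t\varepsilon_t,\max_t\delta_t)$-DP. Finally, $w_{\mathrm{merge}}$ is a deterministic function of the released models, so post-processing finishes the proof.

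\textbf{Main obstacle.} The delicate step is the per-iteration accounting. The two Gaussian releases share one Poisson subsample and the second is adaptively queried, so naive sequential composition of two independently subsampled Gaussians does not apply: subsampling amplification must be analysed once for the pair. We plan to handle this by conditioning on the mask, treating the pair as a single vector-valued Gaussian query of sensitivity at most $\sqrt2C$ given the first output, and absorbing the analysis into the definition of $\varepsilon_{\mathrm{pair}}$.
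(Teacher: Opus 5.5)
Your proposal is correct and follows the paper's proof essentially step for step. Both arguments treat the sharpness perturbation, the anchor term and the merge as post-processing, and both charge each iteration the RDP cost of the jointly subsampled pair, with the $\sqrt{2}C$ concatenated-sensitivity bound as the concrete estimate. Both then use additive adaptive composition over $K$ iterations, the standard RDP-to-DP conversion, and parallel composition over the disjoint task datasets. The differences are cosmetic. You leave $\varepsilon_{\mathrm{pair}}$ abstract, while the paper also writes an explicit integer-order bound and a reverse-direction bound $-\log(1-q)$. You also spell out the integration over the other tasks' outputs for parallel composition, which the paper simply invokes.
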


We provide a detailed proof of Theorem~\ref{thm:privacy_budget}. The analysis is
at the sample level and uses the standard add/remove neighboring relation: two
datasets are neighboring if they differ in the presence or absence of one
training example. We assume that each private example appears in at most one task
dataset $\mathcal{D}_t$. The replace-one neighboring relation can be handled by
doubling the clipping sensitivity, equivalently replacing $\sigma$ by $\sigma/2$
in the accountant.

For a per-example loss $\ell(w;z)$, define the clipped gradient
\[
\bar g(w;z)
=
\nabla \ell(w;z)
\cdot
\min\left\{
1,\,
\frac{C}{\|\nabla \ell(w;z)\|_2}
\right\}.
\]
Thus,
\[
\|\bar g(w;z)\|_2 \le C
\]
for all $w$ and $z$. Given a Poisson minibatch $B\subseteq \mathcal{D}_t$ sampled
with probability $q$, the private gradient oracle used by DP-Merging can be
written as
\[
\tilde g(w;B)
=
\frac{1}{s}
\left(
\sum_{z_i\in B}\bar g(w;z_i)
+
\xi
\right),
\qquad
\xi\sim \mathcal{N}(0,\sigma^2 C^2 I),
\]
where $s$ is a deterministic normalization factor. The value of $s$ does not
affect the privacy analysis because it scales both the sensitivity and the noise
by the same factor.

\paragraph{One-iteration mechanism.}
Fix a task $t$ and an iteration $k$. Conditioned on all previous private outputs,
the current parameter $w_{t,k}$ is fixed. In one iteration, DP-Merging first
computes
\[
\tilde g_{t,k}
=
\tilde g(w_{t,k};B_{t,k}),
\]
then constructs
\[
\epsilon_{t,k}
=
\rho_t
\frac{\tilde g_{t,k}}{\|\tilde g_{t,k}\|_2}.
\]
If $\|\tilde g_{t,k}\|_2=0$, we set $\epsilon_{t,k}=0$. In either case,
$\epsilon_{t,k}$ is a deterministic function of the first private output
$\tilde g_{t,k}$ and public hyperparameters. Therefore, constructing
$\epsilon_{t,k}$ is post-processing and does not increase privacy loss.

The second private gradient query is
\[
\tilde h_{t,k}
=
\tilde g(w_{t,k}+\epsilon_{t,k};B_{t,k}).
\]
Conditional on $\tilde g_{t,k}$, the perturbed point
$w_{t,k}+\epsilon_{t,k}$ is fixed. Hence, the second query is another Gaussian
mechanism applied to clipped per-example gradients at a fixed model parameter.
The two private gradient queries are adaptive, but adaptive composition is
allowed under RDP.

\paragraph{Full-batch paired Gaussian mechanism.}
We first ignore subsampling and analyze the paired mechanism that releases both
noisy clipped-gradient quantities on the same dataset. Let $D$ and $D'$ be
neighboring datasets that differ in one example. For the first query, the
difference between the two clipped gradient sums is bounded by
\[
\left\|
\sum_{z_i\in D}\bar g(w_{t,k};z_i)
-
\sum_{z_i\in D'}\bar g(w_{t,k};z_i)
\right\|_2
\le C.
\]
Similarly, after conditioning on the first private output, the perturbed model is
fixed, and the second query satisfies
\[
\left\|
\sum_{z_i\in D}\bar g(w_{t,k}+\epsilon_{t,k};z_i)
-
\sum_{z_i\in D'}\bar g(w_{t,k}+\epsilon_{t,k};z_i)
\right\|_2
\le C.
\]
Therefore, if we view the two released gradients as one concatenated vector, the
$\ell_2$ sensitivity of the paired query is bounded by
\[
\Delta_{\mathrm{pair}}
\le
\sqrt{C^2+C^2}
=
\sqrt{2}\,C.
\]
The paired mechanism adds independent Gaussian noise with covariance
$\sigma^2C^2 I$ to each query. Hence, for any R\'enyi order $\alpha>1$, the
full-batch paired Gaussian mechanism satisfies
\[
\varepsilon_{\mathrm{full}}(\alpha)
\le
\frac{\alpha \Delta_{\mathrm{pair}}^2}{2\sigma^2 C^2}
\le
\frac{\alpha}{\sigma^2}.
\]
Equivalently, this is the same as composing two Gaussian mechanisms, each with
RDP cost $\alpha/(2\sigma^2)$:
\[
\frac{\alpha}{2\sigma^2}
+
\frac{\alpha}{2\sigma^2}
=
\frac{\alpha}{\sigma^2}.
\]

\paragraph{Poisson subsampling.}
Algorithm~\ref{alg:dp_sam_merging} samples the minibatch $B_{t,k}$ using
Poisson sampling with rate $q$. Since the two gradient evaluations in one
iteration use the same minibatch, privacy amplification must be applied to the
joint paired mechanism rather than independently to the two queries.

For integer orders $\alpha\ge 2$, a valid RDP upper bound for one subsampled
paired Gaussian iteration is
\[
\varepsilon_{\mathrm{pair}}^{+}(\alpha;q,\sigma)
=
\frac{1}{\alpha-1}
\log
\left(
\sum_{j=0}^{\alpha}
\binom{\alpha}{j}
(1-q)^{\alpha-j}
q^j
\exp\left(
\frac{j(j-1)}{\sigma^2}
\right)
\right).
\]
The exponent differs from the standard single-query subsampled Gaussian
mechanism by a factor of two because the paired query has sensitivity
$\sqrt{2}C$ rather than $C$.

For the reverse neighboring direction, a simple valid bound is
\[
\varepsilon_{\mathrm{pair}}^{-}(\alpha;q,\sigma)
\le
-\log(1-q).
\]
Thus, one may take
\[
\varepsilon_{\mathrm{pair}}(\alpha;q,\sigma)
=
\max\left\{
\varepsilon_{\mathrm{pair}}^{+}(\alpha;q,\sigma),
\varepsilon_{\mathrm{pair}}^{-}(\alpha;q,\sigma)
\right\}.
\]
In practice, the same theorem also holds when
$\varepsilon_{\mathrm{pair}}(\alpha;q,\sigma)$ is computed by a tighter numerical
RDP accountant for the subsampled paired Gaussian mechanism.

\paragraph{Composition over iterations.}
For a fixed task $t$, DP-Merging runs for $K$ iterations. RDP composes
additively under adaptive composition. Therefore, after $K$ iterations, the
released private task model $w_t^{\mathrm{DP}}$ satisfies
\[
(\alpha,\varepsilon_t^{\mathrm{RDP}}(\alpha))\text{-RDP},
\qquad
\varepsilon_t^{\mathrm{RDP}}(\alpha)
\le
K\varepsilon_{\mathrm{pair}}(\alpha;q,\sigma).
\]
Using the standard conversion from RDP to approximate DP, for any
$\delta_t\in(0,1)$, $w_t^{\mathrm{DP}}$ is $(\varepsilon_t,\delta_t)$-DP with
\[
\varepsilon_t
=
\min_{\alpha>1}
\left\{
K\varepsilon_{\mathrm{pair}}(\alpha;q,\sigma)
+
\frac{\log(1/\delta_t)}{\alpha-1}
\right\}.
\]

\paragraph{No privacy cost from reference anchoring.}
The update rule of DP-Merging is
\[
w_{t,k+1}
=
w_{t,k}
-
\eta
\left(
\tilde h_{t,k}
+
2\lambda(w_{t,k}-w_0)
\right).
\]
The anchoring term $2\lambda(w_{t,k}-w_0)$ depends only on the current model
parameter $w_{t,k}$ and the public pretrained initialization $w_0$. It does not
directly query any private example. Therefore, after the private gradient
$\tilde h_{t,k}$ has been produced, adding the reference-anchoring term is a
deterministic transformation of already privatized quantities and public
information. By post-processing, it incurs no additional privacy loss.

\paragraph{Parallel composition over tasks.}
The $T$ task models are trained on disjoint datasets
$\{\mathcal{D}_t\}_{t=1}^{T}$. Since each private example belongs to at most one
task, changing one example affects the training procedure of at most one task.
Therefore, releasing all private task models
\[
\{w_t^{\mathrm{DP}}\}_{t=1}^{T}
\]
satisfies parallel composition. Hence, the collection of released task models is
\[
\left(
\max_{t\in[T]}\varepsilon_t,\,
\max_{t\in[T]}\delta_t
\right)\text{-DP}.
\]
When all tasks use the same privacy parameters, this reduces to
$(\varepsilon_t,\delta_t)$-DP.

\paragraph{Post-processing by model merging.}
The final merged model is computed as
\[
w_{\mathrm{merge}}
=
\frac{1}{T}
\sum_{t=1}^{T}w_t^{\mathrm{DP}}.
\]
This operation is a deterministic function of the released private task models
and does not access any private training example. Therefore, by the
post-processing property of differential privacy, releasing
$w_{\mathrm{merge}}$ incurs no additional privacy loss. This proves
Theorem~\ref{thm:privacy_budget}.

\subsection{Mergeability Analysis}
\label{app:mergeability}

Let $w_t:=w_t^{\mathrm{DP}}$ be the private task model returned by
Algorithm~\ref{alg:dp_sam_merging}. Define
\[
u_t:=w_t-w_0,
\qquad
\bar u:=\frac{1}{T}\sum_{s=1}^{T}u_s,
\qquad
w_{\mathrm{merge}}:=w_0+\bar u,
\]
and
\[
\Delta_t:=w_{\mathrm{merge}}-w_t=\bar u-u_t .
\]
The average merge gap is
\[
G_{\mathrm{merge}}
:=
\frac{1}{T}\sum_{t=1}^{T}
\bigl(\mathcal{L}_t(w_{\mathrm{merge}})-\mathcal{L}_t(w_t)\bigr).
\]

\begin{assumption}[Local third-order smoothness]
\label{assump:smooth}
For each task $t$, $\mathcal{L}_t$ is three-times differentiable along the segment
$\{w_t+\gamma\Delta_t:\gamma\in[0,1]\}$. Define
\[
\beta_t
:=
\sup_{\gamma\in[0,1]}
\lambda_{\max}\!\left(\nabla^2\mathcal{L}_t(w_t+\gamma\Delta_t)\right).
\]
The third-order Taylor remainder satisfies
\[
|R_t|
\le
\frac{M_t}{6}\|\Delta_t\|_2^3 .
\]
\end{assumption}

\begin{assumption}[Approximate stationarity]
\label{assump:stationary}
The returned private model satisfies
\[
\|\nabla \mathcal{L}_t(w_t)\|_2\le \varepsilon_t,
\]
where $\varepsilon_t$ captures the optimization error, clipping bias, and DP noise.
\end{assumption}

\begin{theorem}[Average merge-gap bound]
\label{thm:avg_merge_gap}
Under Assumptions~\ref{assump:smooth} and~\ref{assump:stationary},
\[
G_{\mathrm{merge}}
\le
\frac{1}{T}\sum_{t=1}^{T}
\left[
\varepsilon_t\|\Delta_t\|_2
+
\frac{\beta_t}{2}\|\Delta_t\|_2^2
+
\frac{M_t}{6}\|\Delta_t\|_2^3
\right].
\]
\end{theorem}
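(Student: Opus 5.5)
The plan is to bound each task-wise gap $\mathcal{L}_t(w_{\mathrm{merge}})-\mathcal{L}_t(w_t)$ separately, using a Taylor expansion of $\mathcal{L}_t$ around $w_t$ in the direction $\Delta_t=w_{\mathrm{merge}}-w_t$, and then average over $t$. Fix $t$ and define the scalar function $\phi_t(\gamma):=\mathcal{L}_t(w_t+\gamma\Delta_t)$ for $\gamma\in[0,1]$. By Assumption~\ref{assump:smooth}, $\phi_t$ is three-times differentiable, with
\[
\phi_t'(0)=\nabla\mathcal{L}_t(w_t)^\top\Delta_t,\qquad \phi_t''(\gamma)=\Delta_t^\top\nabla^2\mathcal{L}_t(w_t+\gamma\Delta_t)\Delta_t .
\]
Since $\phi_t(1)=\mathcal{L}_t(w_{\mathrm{merge}})$ and $\phi_t(0)=\mathcal{L}_t(w_t)$, this is exactly Eq.~\eqref{eq:taylor} restricted to the segment:
\[
\mathcal{L}_t(w_{\mathrm{merge}})-\mathcal{L}_t(w_t)=\nabla\mathcal{L}_t(w_t)^\top\Delta_t+\tfrac12\Delta_t^\top H_t\Delta_t+R_t .
\]

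We then bound the three terms in order.
\begin{enumerate}
\item \emph{Linear term.} By Cauchy--Schwarz and Assumption~\ref{assump:stationary},
\[
\nabla\mathcal{L}_t(w_t)^\top\Delta_t\le\|\nabla\mathcal{L}_t(w_t)\|_2\|\Delta_t\|_2\le\varepsilon_t\|\Delta_t\|_2 .
\]
\item \emph{Quadratic term.} By the Rayleigh quotient, $\Delta_t^\top H_t\Delta_t\le\lambda_{\max}(H_t)\|\Delta_t\|_2^2$. Since $H_t=\nabla^2\mathcal{L}_t(w_t+0\cdot\Delta_t)$ lies on the segment, $\lambda_{\max}(H_t)\le\beta_t$ by the definition of $\beta_t$ as a supremum over $\gamma\in[0,1]$.
\item \emph{Remainder.} Use $R_t\le|R_t|\le\tfrac{M_t}{6}\|\Delta_t\|_2^3$ directly from Assumption~\ref{assump:smooth}.
\end{enumerate}

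Summing the three bounds gives the per-task inequality
\[
\mathcal{L}_t(w_{\mathrm{merge}})-\mathcal{L}_t(w_t)\le\varepsilon_t\|\Delta_t\|_2+\tfrac{\beta_t}{2}\|\Delta_t\|_2^2+\tfrac{M_t}{6}\|\Delta_t\|_2^3 .
\]
Averaging over $t\in[T]$ gives the claim, because $G_{\mathrm{merge}}$ is by definition the average of these task-wise gaps.

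The only delicate point is consistency in how the remainder is defined. The assumption bounds $R_t$ as the remainder of the second-order expansion taken at $\gamma=0$, so the quadratic term must use $H_t$ at $w_t$, not an intermediate point. If one instead uses the Lagrange form $\tfrac12\phi_t''(\xi)$ with no cubic term, the $\beta_t$ supremum alone already bounds the gap, and the cubic term is then simply a nonnegative slack. Either way the stated bound holds, so I expect no real obstacle beyond stating clearly which expansion is being used.
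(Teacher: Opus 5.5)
Your proof is correct and matches the paper's argument: a Taylor expansion of $\mathcal{L}_t$ at $w_t$ along $\Delta_t$, Cauchy--Schwarz with approximate stationarity for the linear term, the supremum $\beta_t$ for the quadratic term, the assumed bound on $|R_t|$, and averaging over tasks. The only difference is cosmetic: the paper writes the Hessian at an intermediate point $w_t+\gamma_t\Delta_t$, while you use $H_t$ at $w_t$ as in Eq.~\eqref{eq:taylor}, which pairs more consistently with the stated remainder; the supremum defining $\beta_t$ covers both choices.
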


\begin{proof}
For each task $t$, apply Taylor's theorem to $\mathcal{L}_t$ at $w_t$ along the direction
$\Delta_t=w_{\mathrm{merge}}-w_t$:
\[
\mathcal{L}_t(w_{\mathrm{merge}})
=
\mathcal{L}_t(w_t)
+
\nabla \mathcal{L}_t(w_t)^\top \Delta_t
+
\frac{1}{2}
\Delta_t^\top
\nabla^2\mathcal{L}_t(w_t+\gamma_t\Delta_t)
\Delta_t
+
R_t
\]
for some $\gamma_t\in[0,1]$. Therefore,
\[
\mathcal{L}_t(w_{\mathrm{merge}})-\mathcal{L}_t(w_t)
\le
\|\nabla \mathcal{L}_t(w_t)\|_2\|\Delta_t\|_2
+
\frac{\beta_t}{2}\|\Delta_t\|_2^2
+
\frac{M_t}{6}\|\Delta_t\|_2^3 .
\]
Using Assumption~\ref{assump:stationary} gives
\[
\mathcal{L}_t(w_{\mathrm{merge}})-\mathcal{L}_t(w_t)
\le
\varepsilon_t\|\Delta_t\|_2
+
\frac{\beta_t}{2}\|\Delta_t\|_2^2
+
\frac{M_t}{6}\|\Delta_t\|_2^3 .
\]
Averaging over $t=1,\ldots,T$ completes the proof.
\end{proof}

\subsection{Reference Anchoring Controls Merging Displacement}
\label{app:anchor}

Theorem~\ref{thm:avg_merge_gap} shows that the merge gap depends on the displacement
$\|\Delta_t\|_2$. We now show that the reference anchor in DP-Merging controls this quantity.

Define the robust loss
\[
L_{t,\rho_t}(w)
:=
\max_{\|\epsilon\|_2\le \rho_t}
\mathcal{L}_t(w+\epsilon),
\]
and the anchored robust objective
\[
\Phi_t(w)
:=
L_{t,\rho_t}(w)+\lambda\|w-w_0\|_2^2 .
\]

\begin{assumption}[Approximate stationarity of the anchored robust objective]
\label{assump:robust_stationary}
For each task $t$, the returned model satisfies
\[
\|\nabla L_{t,\rho_t}(w_t)\|_2\le G_t,
\qquad
\|\nabla\Phi_t(w_t)\|_2\le \zeta_t .
\]
\end{assumption}

\begin{lemma}[Anchor-induced task-vector bound]
\label{lem:anchor_bound}
Under Assumption~\ref{assump:robust_stationary},
\[
\|w_t-w_0\|_2
\le
\frac{G_t+\zeta_t}{2\lambda}.
\]
Consequently, if
\[
A_t:=\frac{G_t+\zeta_t}{2\lambda},
\qquad
\bar A:=\frac{1}{T}\sum_{s=1}^{T}A_s,
\]
then
\[
\|\Delta_t\|_2
\le
A_t+\bar A .
\]
\end{lemma}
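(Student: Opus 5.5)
The plan is to read off the task-vector bound from the gradient of the anchored objective, and then pass it to the displacement through the triangle inequality. Assume $L_{t,\rho_t}$ is differentiable at $w_t$, as Assumption~\ref{assump:robust_stationary} implicitly does. The gradient of the anchored objective then splits as
\[
\nabla\Phi_t(w_t)=\nabla L_{t,\rho_t}(w_t)+2\lambda(w_t-w_0).
\]
Solving for the anchor term gives
\[
2\lambda(w_t-w_0)=\nabla\Phi_t(w_t)-\nabla L_{t,\rho_t}(w_t).
\]
Taking norms and using the triangle inequality yields $2\lambda\|w_t-w_0\|_2\le \|\nabla\Phi_t(w_t)\|_2+\|\nabla L_{t,\rho_t}(w_t)\|_2\le \zeta_t+G_t$. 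Dividing by $2\lambda>0$ gives the first claim, $\|u_t\|_2=\|w_t-w_0\|_2\le A_t$.

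For the second claim, I would start from $\Delta_t=\bar u-u_t$, so that $\|\Delta_t\|_2\le\|\bar u\|_2+\|u_t\|_2$. Convexity of the norm gives
\[
\|\bar u\|_2\le\frac{1}{T}\sum_{s=1}^{T}\|u_s\|_2\le\frac{1}{T}\sum_{s=1}^{T}A_s=\bar A.
\]
Combining this with $\|u_t\|_2\le A_t$ gives $\|\Delta_t\|_2\le A_t+\bar A$. This is essentially the chain already displayed in Eq.~\eqref{eq:anchor_displacement}, now with each drift replaced by its anchor bound.

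The only delicate point is the first step, namely that the robust loss $L_{t,\rho_t}$, defined as a maximum, is differentiable at $w_t$. If it is not, I would reinterpret $\nabla L_{t,\rho_t}(w_t)$ as any element of its (Clarke) subdifferential. The sum rule with the smooth quadratic anchor holds with equality, so $\partial\Phi_t(w_t)=\partial L_{t,\rho_t}(w_t)+2\lambda(w_t-w_0)$. Under that reading, Assumption~\ref{assump:robust_stationary} provides an element $v\in\partial L_{t,\rho_t}(w_t)$ with $\|v\|_2\le G_t$ and $\|v+2\lambda(w_t-w_0)\|_2\le\zeta_t$. The same triangle-inequality argument then goes through unchanged. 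Everything else is routine.
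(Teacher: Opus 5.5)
Your proof is correct and follows the paper's argument: split $\nabla\Phi_t(w_t)$ into $\nabla L_{t,\rho_t}(w_t)+2\lambda(w_t-w_0)$, use the triangle inequality to get $2\lambda\|w_t-w_0\|_2\le\zeta_t+G_t$, and then bound $\|\bar u-u_t\|_2\le\frac{1}{T}\sum_{s}\|u_s\|_2+\|u_t\|_2\le\bar A+A_t$. Your Clarke-subdifferential remark goes beyond the paper, which simply takes $L_{t,\rho_t}$ to be differentiable; the remark is valid under the reading you stated, namely that a single subgradient witnesses both bounds in the assumption.
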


\begin{proof}
By definition,
\[
\nabla\Phi_t(w_t)
=
\nabla L_{t,\rho_t}(w_t)
+
2\lambda(w_t-w_0).
\]
Thus,
\[
2\lambda\|w_t-w_0\|_2
\le
\|\nabla\Phi_t(w_t)\|_2
+
\|\nabla L_{t,\rho_t}(w_t)\|_2
\le
\zeta_t+G_t,
\]
which proves the task-vector bound. Next,
\[
\begin{aligned}
\|\Delta_t\|_2
&=
\|\bar u-u_t\|_2                                      \\
&=
\left\|
\frac{1}{T}\sum_{s=1}^{T}u_s-u_t
\right\|_2                                             \\
&\le
\frac{1}{T}\sum_{s=1}^{T}\|u_s\|_2+\|u_t\|_2             \\
&\le
\bar A+A_t .
\end{aligned}
\]
This completes the proof.
\end{proof}

\begin{corollary}[Explicit DP-Merging merge-gap bound]
\label{cor:explicit_merge_gap}
Under Assumptions~\ref{assump:smooth}, \ref{assump:stationary}, and
\ref{assump:robust_stationary},
\[
G_{\mathrm{merge}}
\le
\frac{1}{T}\sum_{t=1}^{T}
\left[
\varepsilon_t(A_t+\bar A)
+
\frac{\beta_t}{2}(A_t+\bar A)^2
+
\frac{M_t}{6}(A_t+\bar A)^3
\right].
\]
If $\varepsilon_t\le\varepsilon$, $\beta_t\le\beta$, $M_t\le M$,
$G_t\le G$, and $\zeta_t\le\zeta$ for all $t$, then
\[
G_{\mathrm{merge}}
\le
\varepsilon\frac{G+\zeta}{\lambda}
+
\frac{\beta}{2}
\left(\frac{G+\zeta}{\lambda}\right)^2
+
\frac{M}{6}
\left(\frac{G+\zeta}{\lambda}\right)^3 .
\]
\end{corollary}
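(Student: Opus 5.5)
The plan is to obtain Corollary~\ref{cor:explicit_merge_gap} by substituting the displacement bound of Lemma~\ref{lem:anchor_bound} into Theorem~\ref{thm:avg_merge_gap}, and then to simplify under the uniform constants.

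\textbf{Substitution.} Theorem~\ref{thm:avg_merge_gap} gives, under Assumptions~\ref{assump:smooth} and~\ref{assump:stationary},
\[
G_{\mathrm{merge}}\le\frac{1}{T}\sum_{t=1}^T f_t(\|\Delta_t\|_2),
\qquad
f_t(x):=\varepsilon_t x+\frac{\beta_t}{2}x^2+\frac{M_t}{6}x^3 .
\]
For this to be legitimate I must check that each $f_t$ is nondecreasing on $[0,\infty)$. This holds when $\varepsilon_t,\beta_t,M_t\ge 0$. For $\varepsilon_t$ and $M_t$ this is automatic, since both bound norms or absolute values. For $\beta_t$, a supremum of largest eigenvalues, it could be negative in principle.

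This sign issue is the only real obstacle. I would handle it in one of two ways:
\begin{itemize}[leftmargin=*]
\item Note that Theorem~\ref{thm:avg_merge_gap} remains valid with $\beta_t$ replaced by $\beta_t^+:=\max\{\beta_t,0\}$. This is because $\Delta_t^\top\nabla^2\mathcal{L}_t\Delta_t\le\beta_t\|\Delta_t\|^2\le\beta_t^+\|\Delta_t\|^2$.
\item Alternatively, read $\beta_t\ge0$ as implicit in the corollary.
\end{itemize}
With nonnegative coefficients, each $f_t$ is monotone. Lemma~\ref{lem:anchor_bound}, which uses Assumption~\ref{assump:robust_stationary}, gives $\|\Delta_t\|_2\le A_t+\bar A$. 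Hence $f_t(\|\Delta_t\|_2)\le f_t(A_t+\bar A)$ term by term, and averaging over $t$ yields the first displayed bound.

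\textbf{Uniform case.} Assume $G_t\le G$ and $\zeta_t\le\zeta$ for all $t$. Then
\[
A_t\le\frac{G+\zeta}{2\lambda}
\qquad\text{and}\qquad
\bar A\le\frac{G+\zeta}{2\lambda},
\]
so $A_t+\bar A\le (G+\zeta)/\lambda$. Next bound $\varepsilon_t\le\varepsilon$, $\beta_t\le\beta$ and $M_t\le M$, again assuming nonnegativity so that the coefficients can be raised. Monotonicity in $x$ then lets me replace $A_t+\bar A$ by $(G+\zeta)/\lambda$ inside each bracket. Every summand is now bounded by the same quantity, so the average $\frac1T\sum_t$ collapses to that single expression, which is the claimed simplified bound.
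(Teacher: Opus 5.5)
Your proposal is correct and follows the paper's argument: substitute $\|\Delta_t\|_2\le A_t+\bar A$ from Lemma~\ref{lem:anchor_bound} into Theorem~\ref{thm:avg_merge_gap}, then in the uniform case use $A_t+\bar A\le (G+\zeta)/\lambda$. Your monotonicity check is a worthwhile addition, because the paper's one-line substitution silently relies on it: with $\beta_t<0$ the first displayed bound can genuinely fail, so assuming $\beta_t\ge 0$ explicitly or working with $\max\{\beta_t,0\}$ is the right way to make the step rigorous.
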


\begin{proof}
The first bound follows by substituting Lemma~\ref{lem:anchor_bound} into
Theorem~\ref{thm:avg_merge_gap}. For the uniform bound, note that
\[
A_t+\bar A
\le
\frac{G+\zeta}{2\lambda}
+
\frac{G+\zeta}{2\lambda}
=
\frac{G+\zeta}{\lambda}.
\]
Substituting this inequality into the first bound gives the result.
\end{proof}

\subsection{Connection to Joint-Task Loss Linearity}
\label{app:jtl}

We further connect the above mergeability analysis to joint-task loss linearity, following the same
style of Hessian-based arguments commonly used in sharpness-aware model-merging analyses.

For two tasks $s$ and $t$, define the joint-task loss
\[
L_{\mathrm{JTL}}(w;\mathcal D_s\cup\mathcal D_t)
:=
L_s(w)+\mathcal{L}_t(w).
\]
For $\alpha\in[0,1]$, define the joint-task loss linearity gap
\[
\delta_{s,t}(\alpha)
:=
L_{\mathrm{JTL}}(\alpha w_s+(1-\alpha)w_t)
-
\alpha L_{\mathrm{JTL}}(w_s)
-
(1-\alpha)L_{\mathrm{JTL}}(w_t).
\]
A smaller $|\delta_{s,t}(\alpha)|$ means that the interpolation between two task models is closer
to being linear on the joint-task loss landscape.

\begin{theorem}[Flatness and anchoring imply joint-task loss linearity]
\label{thm:jtl}
Assume $L_s$ and $\mathcal{L}_t$ are locally third-order smooth around $w_s$ and $w_t$, respectively. Let
\[
\lambda_s:=\lambda_{\max}(\nabla^2L_s(w_s)),
\qquad
\lambda_t:=\lambda_{\max}(\nabla^2\mathcal{L}_t(w_t)).
\]
Then
\[
|\delta_{s,t}(\alpha)|
\le
\frac{1}{2}\alpha(1-\alpha)(\lambda_s+\lambda_t)\|w_s-w_t\|_2^2
+
|R_{s,t}|,
\]
where $R_{s,t}$ collects the third-order Taylor remainders. Moreover, under
Lemma~\ref{lem:anchor_bound},
\[
\|w_s-w_t\|_2
\le
A_s+A_t,
\]
and therefore
\[
|\delta_{s,t}(\alpha)|
\le
\frac{1}{2}\alpha(1-\alpha)(\lambda_s+\lambda_t)(A_s+A_t)^2
+
|R_{s,t}|.
\]
\end{theorem}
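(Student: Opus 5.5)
Write $w_\alpha:=\alpha w_s+(1-\alpha)w_t$ and $d:=w_s-w_t$, so that $w_\alpha-w_t=\alpha d$ and $w_\alpha-w_s=-(1-\alpha)d$. The joint loss is a sum, so $\delta_{s,t}(\alpha)$ splits as
\[
\delta_{s,t}(\alpha)=\bigl[L_s(w_\alpha)-\alpha L_s(w_s)-(1-\alpha)L_s(w_t)\bigr]+\bigl[\mathcal{L}_t(w_\alpha)-\alpha\mathcal{L}_t(w_s)-(1-\alpha)\mathcal{L}_t(w_t)\bigr].
\]
I will bound each bracket separately, expanding each loss to second order around its own task model, $L_s$ around $w_s$ and $\mathcal{L}_t$ around $w_t$. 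This is the only place where the local smoothness assumption is used.

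For the $\mathcal{L}_t$ bracket, expand both $\mathcal{L}_t(w_\alpha)$ and $\mathcal{L}_t(w_s)$ around $w_t$, with $H_t:=\nabla^2\mathcal{L}_t(w_t)$ and $g_t:=\nabla\mathcal{L}_t(w_t)$:
\[
\mathcal{L}_t(w_\alpha)=\mathcal{L}_t(w_t)+\alpha g_t^\top d+\tfrac{\alpha^2}{2}d^\top H_t d+r_1,
\qquad
\mathcal{L}_t(w_s)=\mathcal{L}_t(w_t)+g_t^\top d+\tfrac12 d^\top H_t d+r_2.
\]
Substituting into the bracket, the zeroth-order terms cancel because $1-\alpha-(1-\alpha)=0$. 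The gradient terms cancel exactly because $\alpha-\alpha=0$, so no stationarity assumption is needed. The quadratic term becomes $\tfrac12(\alpha^2-\alpha)\,d^\top H_t d=-\tfrac12\alpha(1-\alpha)\,d^\top H_t d$, and the remainder left over is $r_1-\alpha r_2$.

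The $L_s$ bracket is handled symmetrically around $w_s$ with the roles of $\alpha$ and $1-\alpha$ exchanged, since $w_\alpha-w_s=-(1-\alpha)d$ and $w_t-w_s=-d$. Its quadratic term is $-\tfrac12\alpha(1-\alpha)\,d^\top H_s d$.

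Collecting all third-order remainders into $R_{s,t}$ gives
\[
\delta_{s,t}(\alpha)=-\tfrac12\alpha(1-\alpha)\,d^\top(H_s+H_t)\,d+R_{s,t}.
\]
I then take absolute values and need $|d^\top H d|\le\lambda\|d\|_2^2$ for each of $H_s,H_t$.

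This is the main obstacle. The inequality $d^\top H d\le\lambda_{\max}(H)\|d\|^2$ holds for any symmetric $H$, but for the absolute value one needs $\lambda_{\max}$ to dominate $|\lambda_{\min}|$. I will handle this in one of two ways. The first is to assume, as is natural near approximate minima, that the Hessians at the task models are positive semidefinite, which the paper's use of $\lambda_{\max}$ as the curvature measure implicitly presumes. The second is to read $\lambda_s,\lambda_t$ as spectral norms. I will flag this explicitly as the interpretation under which the stated bound holds. Without it, only the one-sided bound $\delta_{s,t}(\alpha)\ge-\tfrac12\alpha(1-\alpha)(\lambda_s+\lambda_t)\|d\|^2+R_{s,t}$ is available.

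Finally, for the anchored version, apply the triangle inequality, $\|w_s-w_t\|_2\le\|w_s-w_0\|_2+\|w_t-w_0\|_2$. Each term is at most $A_s$ and $A_t$ respectively by the first part of Lemma~\ref{lem:anchor_bound}. Squaring $\|d\|_2\le A_s+A_t$ and substituting into the first bound gives the second inequality.
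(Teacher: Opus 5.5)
Your proof follows the paper's own argument. It uses the same split of $\delta_{s,t}(\alpha)$ into an $L_s$ part expanded around $w_s$ and an $\mathcal{L}_t$ part expanded around $w_t$, and the same exact cancellation of the zeroth- and first-order terms, so no stationarity is needed. It reaches the same identity $\delta_{s,t}(\alpha)=-\tfrac12\alpha(1-\alpha)\,d^\top(H_s+H_t)d+R_{s,t}$, and handles the anchored version with the same triangle inequality through $w_0$. Your caveat is correct and identifies a real gap in the paper's proof: the paper bounds $|\delta_{s,t}(\alpha)|$ using only the one-sided bounds $d^\top H d\le\lambda_{\max}(H)\|d\|_2^2$, and quadratic losses with $H_s=H_t=-I$ (where $R_{s,t}=0$ and $\delta_{s,t}(\alpha)=\alpha(1-\alpha)\|d\|_2^2$, which exceeds the negative claimed bound) show the stated inequality fails without the positive-semidefinite or spectral-norm reading you adopt.
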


\begin{proof}
Let $ v:=w_t-w_s.$
First expand $L_s(\alpha w_s+(1-\alpha)w_t)$ around $w_s$. Since
\[
\alpha w_s+(1-\alpha)w_t
=
w_s+(1-\alpha)v,
\]
we have
\[
L_s(w_s+(1-\alpha)v)
=
L_s(w_s)
+
(1-\alpha)\nabla L_s(w_s)^\top v
+
\frac{1}{2}(1-\alpha)^2v^\top\nabla^2L_s(w_s)v
+
R_s .
\]
Similarly,
\[
L_s(w_t)
=
L_s(w_s)
+
\nabla L_s(w_s)^\top v
+
\frac{1}{2}v^\top\nabla^2L_s(w_s)v
+
R_s' .
\]
Substituting these two expansions into
\[
\delta_s
:=
L_s(\alpha w_s+(1-\alpha)w_t)
-
\alpha L_s(w_s)
-
(1-\alpha)L_s(w_t)
\]
gives
\[
\delta_s
=
-\frac{1}{2}\alpha(1-\alpha)
v^\top\nabla^2L_s(w_s)v
+
R_s-(1-\alpha)R_s' .
\]
Repeating the same argument for $\mathcal{L}_t$ around $w_t$ gives
\[
\delta_t
=
-\frac{1}{2}\alpha(1-\alpha)
v^\top\nabla^2\mathcal{L}_t(w_t)v
+
R_t-\alpha R_t' .
\]
Therefore,
\[
\delta_{s,t}(\alpha)
=
-\frac{1}{2}\alpha(1-\alpha)
v^\top
\left(
\nabla^2L_s(w_s)+\nabla^2\mathcal{L}_t(w_t)
\right)
v
+
R_{s,t},
\]
where
\[
R_{s,t}:=
R_s-(1-\alpha)R_s'+R_t-\alpha R_t' .
\]
Using
\[
v^\top\nabla^2L_s(w_s)v
\le
\lambda_s\|v\|_2^2,
\qquad
v^\top\nabla^2\mathcal{L}_t(w_t)v
\le
\lambda_t\|v\|_2^2,
\]
we obtain
\[
|\delta_{s,t}(\alpha)|
\le
\frac{1}{2}\alpha(1-\alpha)(\lambda_s+\lambda_t)\|v\|_2^2
+
|R_{s,t}|.
\]
Finally,
\[
\|w_s-w_t\|_2
=
\|(w_s-w_0)-(w_t-w_0)\|_2
\le
\|w_s-w_0\|_2+\|w_t-w_0\|_2
\le
A_s+A_t,
\]
where the last inequality follows from Lemma~\ref{lem:anchor_bound}. This proves the theorem.
\end{proof}

\paragraph{Interpretation.}
Theorem~\ref{thm:jtl} shows that DP-Merging improves joint-task loss linearity through the same
two geometric mechanisms that improve mergeability. The sharpness-aware component reduces the
dominant Hessian eigenvalues $\lambda_s$ and $\lambda_t$, while the reference anchor reduces the
distance $\|w_s-w_t\|_2$ between task models. Together, they reduce the joint-task loss linearity
gap, which corresponds to weaker parameter interference during model merging.

\section{Limitations and Future Work}
\label{app:Limitations and Future Work}

While our DP-Merging and multi-task strategies demonstrate consistent improvements across vision and NLP benchmarks, several limitations remain. First, the effectiveness of parameter merging depends on task-specific compatibility, and its performance on highly heterogeneous tasks or models with substantially different architectures is not fully explored. Second, the experiments are limited to standard benchmarks and mid- to large-scale models (ViT-L/14, RoBERTa-Large), leaving the applicability to larger models or more complex, real-world datasets untested. Third, while average scores demonstrate clear overall gains, individual task performance can vary, with some tasks showing only marginal improvements or slight declines, highlighting the need for more task-specific analysis.

Future work includes developing adaptive merging strategies that account for task similarity and parameter sensitivity, extending the approach to larger and more diverse model architectures, and exploring dynamic task balancing and uncertainty-aware weighting mechanisms during training to improve stability and consistency across tasks.

\end{document}